\documentclass[transaction]{IEEEtran}

\usepackage[nottoc]{tocbibind}

\usepackage[numbers, sort&compress]{natbib} 

\usepackage{graphicx} 
\usepackage[table,xcdraw]{xcolor}
\usepackage{amsmath,amsfonts}
\usepackage{amssymb}
\usepackage{amsthm}
\usepackage{wrapfig}
\usepackage{lipsum,multicol}
\usepackage{varwidth}
\usepackage{tasks}

\graphicspath{{./Experiments/}}

\usepackage{float}
\usepackage{caption}

\usepackage{algorithmic}
\usepackage{algorithm}
\usepackage{array}
\usepackage[caption=false,font=normalsize,labelfont=sf,textfont=sf]{subfig}
\usepackage{textcomp}
\usepackage{stfloats}
\usepackage{verbatim}
\usepackage{setspace}
\usepackage{listings}
\usepackage{xcolor}
\definecolor{codegreen}{rgb}{0,0.6,0}
\definecolor{codegray}{rgb}{0.5,0.5,0.5}
\definecolor{codepurple}{rgb}{0.58,0,0.82}
\definecolor{backcolour}{rgb}{0.95,0.95,0.92}
\lstdefinestyle{codestyle}{
    backgroundcolor=\color{backcolour},   
    commentstyle=\color{codegreen},
    keywordstyle=\color{magenta},
    numberstyle=\tiny\color{codegray},
    stringstyle=\color{codepurple},
    basicstyle=\ttfamily\footnotesize,
    breakatwhitespace=false,         
    breaklines=true,                 
    captionpos=b,                    
    keepspaces=true,                 
    numbers=left,                    
    numbersep=5pt,                  
    showspaces=false,                
    showstringspaces=false,
    showtabs=false,                  
    tabsize=2
}
\usepackage{hyperref}
\usepackage{orcidlink}

\newtheorem{definition}{Definition}			
\newtheorem{proposition}{Proposition}		
\newtheorem{remark}{Remark} 				
\newtheorem{axiom}{Axiom}					
\newtheorem{assumption}{Assumption}			
\newtheorem*{rationale}{Rationale}			

\newcommand{\Csoftmax}{\operatorname{C\text{-}softmax}} 
\newcommand{\Trxd}{\mathcal{T}}
\newcommand{\vecomega}{\boldsymbol{\omega}}		
\newcommand{\vecalpha}{\boldsymbol{\alpha}}		
\newcommand{\vecbeta}{\boldsymbol{\beta}}		

\newcommand{\real}{\mathbb{R}}							
\newcommand{\Raw}{\boldsymbol{R}}						
\newcommand{\Context}{\boldsymbol{C}} 					
\newcommand{\Priority}{\boldsymbol{\Omega}} 			
\newcommand{\Data}{\real^{r \times d}} 					
\newcommand{\Output}{\mathcal{Y}} 						
\newcommand{\Simplex}{\Delta^{d - 1}} 					
\newcommand{\RSpace}{\mathcal{S}} 						
\newcommand{\Act}{\mathcal{G}} 							
\newcommand{\RISimplex}{{\Simplex}^{(r)}_{\text{int}}}	
\newcommand{\DataEpsilon}{\Data_{\epsilon}}				
\newcommand{\DataPos}{(\real_{>0})^{r \times d}}		
 
\newcommand{\GitHubLink}[1]{\href{https://github.com/AIntelligent/Conflict-Framework/releases}{\text{#1}}}
\newcommand{\ZenodoLink}{\href{https://doi.org/10.5281/zenodo.20309044}{\texttt{DOI:10.5281/zenodo.20309044}}}

\newcommand{\CAHP}{\operatorname{C\text{-}AHP}}
\newcommand{\CAHPPP}{\operatorname{C\text{-}AHP\text{++}}}

\usepackage{enumerate}
\usepackage{datetime}
\usepackage{adjustbox}
\usepackage{booktabs}

\begin{document}

	\title{A Mathematical Conflict Framework for Contextual Data Modulation}

	\author{Hakan Emre Kartal,~\IEEEmembership{Member,~IEEE}%
	\orcidlink{0000-0002-3952-7235}}

	\maketitle

	\begin{abstract}
	In this study, a generalized operator-based mathematical conflict framework is presented to explicitly 
	represent structural discrepancies between raw data and contextual data.
	The proposed structure treats conflict as a local, directional, and context-sensitive quantity, integrating
	components such as weighting, scale behavior, and output mapping under a unified abstract operator.
	Without being reduced to a specific learning algorithm or optimization method, the framework is defined as a 
	general structure adaptable to different classes of problems.
	While existing approaches typically treat conflict merely as an implicit side effect embedded within the 
	optimization process, the proposed framework considers conflict as an independent, operator-based, and 
	component-level mathematical object.
	\end{abstract}

	\begin{IEEEkeywords}
		Conflict modeling, 
		Context-aware systems, 
		Priority-Modulated Comparison, 
		Multi-Criteria Decision Making (MCDM), 
		Information Fusion,
		Contextual Conflict Frameworks,
		Decision theory, 
		Conflict operators, 
		Contextual weighting, 
		Representation learning, 
		Signed Discrepancy Measures, 
		Fusion Frameworks, 
		Scale-Invariant Operators
	\end{IEEEkeywords}

\section{Introduction}
\IEEEPARstart{C}{onflict}, namely the discrepancies between raw data and contextual data, has often been studied implicitly in machine learning, multi-criteria decision making (MCDM), and contextual learning problems.
Such discrepancies have frequently been addressed in the literature under concepts such as \textit{loss conflict}
\cite{yu2020gradientsurgerymultitasklearning}, \textit{gradient conflict}
\cite{senerkoltun2019}, \textit{preference inconsistency}
\cite{keeneyraiffa1979}, or \textit{trade-off}.

However, these approaches generally:
\begin{enumerate}
	\item treat conflict as an \textbf{implicit} side effect,
	\item fail to simultaneously model the \textbf{magnitude, directionality, and contextual dependency} of conflict,
	\item do not provide a \textbf{general operator framework} between data representation and decision output.
\end{enumerate}

In this study, conflict is treated not as a classical optimization loss or distance measure, but rather as an interpretation-independent and modular \textbf{operator structure} defined between data representations.

Rather than proposing a new conflict metric, the proposed structure introduces an axiomatic and generalizable mathematical \textbf{Conflict Framework} for the notion of ``conflict.''

This study presents a general framework that defines the notion of conflict, representing directional discrepancies between raw values and contextual values, within an axiomatic structure.
Accordingly, it does not propose a new distance measure, divergence, or optimization technique.

In this context, the proposed approach should not be interpreted as a direct generalization of classical divergence or distance measures.
In particular, while structures such as ``Bregman divergences'' and the ``$f$-divergence family''
\cite{bregman1967relaxation, csiszar1967information} aim to quantitatively characterize separations between distributions or measures, the notion of conflict considered in this study is treated as a \textbf{directional, contextually modulated, and operator-based structural discrepancy object}.

The proposed framework consists of three components:
\begin{enumerate}[i]
	\item a conflict kernel $\Phi^{(g)}$ that measures directional discrepancies between components
	(see Definition~\ref{def:conflict-kernel-operator}),
	\item a priority matrix $\Priority$
	that represents relative priorities among criteria
	(see Definition~\ref{def:priority-structual-matrix}), and
	\item a projection operator $\Psi$ that maps conflict outputs into a decision or analysis space
	(see Definition~\ref{def:fusion-operator}).
\end{enumerate}

This separation enables conflict measurement, contextual weighting, and decision reduction processes to be structurally disentangled, yielding a modular and extensible framework.

Existing approaches generally treat conflict as an implicit side effect embedded within the optimization process, as a symmetric divergence, or as an inter-task balancing problem.
Although these approaches produce effective results within their respective problem domains, they do not structurally model the directional nature of conflict, contextual priority modulation, and decision projection as mutually separated components.

In contrast, the proposed framework treats conflict not as a byproduct of a specific optimization function, but as an independent mathematical object with explicitly separated layers of measurement, modulation, and projection.

\begin{table}[!ht]
	\resizebox{\columnwidth}{!}{%
	\begin{tabular}{|l|c|c|c|}
		\hline
		\rowcolor[HTML]{C0C0C0} 
		\textbf{Property} & \textbf{\begin{tabular}[c]{@{}c@{}}Loss-Based\\ Approaches\end{tabular}} & \textbf{\begin{tabular}[c]{@{}c@{}}Divergence/Distance\\ Approaches\end{tabular}} & \textbf{\begin{tabular}[c]{@{}c@{}}Proposed\\ Framework\end{tabular}} \\ \hline
		\begin{tabular}[c]{@{}l@{}}Explicit representation\\ of conflict\end{tabular} & Implicit/Partial & Partial & Yes \\ \hline
		\begin{tabular}[c]{@{}l@{}}Directional conflict\\ encoding\end{tabular} & Limited & \begin{tabular}[c]{@{}c@{}}Generally\\ Symmetric\end{tabular} & Yes \\ \hline
		\begin{tabular}[c]{@{}l@{}}Contextual priority\\ modulation\end{tabular} & Rare & None & \begin{tabular}[c]{@{}c@{}}Explicitly\\ available\end{tabular} \\ \hline
		\begin{tabular}[c]{@{}l@{}}Measurement / projection\\ separation\end{tabular} & None & None & Yes \\ \hline
		\begin{tabular}[c]{@{}l@{}}Operator family\\ approach\end{tabular} & None & Limited & Yes \\ \hline
		Scale regime separation & Limited & Limited & \begin{tabular}[c]{@{}c@{}}Explicitly\\ defined\end{tabular} \\ \hline
		\end{tabular}%
	}
	\caption{
		Conceptual comparison between existing conflict/divergence-based approaches and the proposed conflict framework in terms of structural properties.
		The table does not indicate performance superiority of methods; rather, it illustrates structural differences in how conflict is represented.
	}
	\label{tab:comparation-of-the-other-methods}
\end{table}

As shown in Table~\ref{tab:comparation-of-the-other-methods}, the proposed approach is not intended to replace existing methods.
Instead, it provides a more general level of abstraction in which the processes of conflict measurement, contextual modulation, and projection into the decision space are treated as structurally separated components.
This separation enables different conflict regimes to be modeled within a unified mathematical framework.

This approach is also conceptually related to the notions of disentangled representation and modular structures discussed in the modern representation learning literature
\cite{bengio2013representation}.
However, the objective of this study is not the disentanglement of learned representation spaces, but rather the structural definition of conflict measurement and projection processes under distinct operators.

\begin{remark}
In this study, conflict is not used in the sense of semantic contradiction or logical inconsistency; rather, it refers to a directional and contextually modulated structural discrepancy.
\end{remark}	


\section{Inputs and Representation Spaces}
\label{sec:inputs-and-representation-spaces}
	The proposed conflict model requires different types of inputs to be represented within explicitly defined
	representation spaces.
	
	In this section, the data matrices, priority structures, and the assumptions associated with these
	structures, on which the conflict operators are defined, are systematically presented in order to establish
	a consistent foundation for the operator family introduced in the subsequent sections.

\begin{definition}[Data Matrices]
\label{def:data-matrices}
	\( \epsilon > 0\) and \(
		\DataEpsilon = \left\{
			\boldsymbol{X} \in \Data \; \middle| \; x_{ij} \ge \epsilon, \; \forall{i, j}
		\right\}
	\) being given, the raw data and contextual data matrices are respectively defined as:
	\[
		\Raw, \Context \in \DataEpsilon
	\]

	Here, each row ($i \in \{ 1, \dots, r \}$) represents an instance, a decision unit, or a state;
	and each column ($j \in \{ 1, \dots, d \}$) represents a feature or criterion.

	This distinction enables conflict to be modeled in an instance-level yet feature-sensitive manner.
\end{definition}
\begin{remark}
	In this study, $\epsilon > 0$ does not represent a small-parameter assumption in the classical
	sense used in the analysis literature; rather, it is a \textbf{structural definition parameter}
	included in the definition of the data matrices that guarantees the
	\textbf{definitional and mathematical consistency} of the model.

	In this context, $\epsilon$ may be chosen arbitrarily small; however, it \textbf{cannot be equal to zero.}
	This is a necessary consequence of the principle of positivity and measurability
	(see Definition~\ref{def:positivity-and-measurability}) upon which the conflict framework is based.
\end{remark}
\begin{remark}[$\epsilon$ Assumption]
	In order for the conflict kernel to remain well-defined, all components are required to be positive.
	For this reason, the data matrices are defined on the set $\DataEpsilon$.
	Here, $\epsilon > 0$ is not a parameter introduced for numerical regularization purposes;
	rather, it is a structural constraint defining the domain of the model.

	Since zero-valued components may arise in practical applications, the data can be mapped into
	the set $\DataEpsilon$ through an appropriate positive transformation or regularization.
	The effect of such regularizations on conflict measurements depends on the application context
	and may additionally be examined in experimental analyses.
\end{remark}

\begin{assumption}[Computational Admissibility]
	The existence of a positive lower bound ($\epsilon > 0$) for the raw and contextual data matrices
	($\Raw, \Context$) is necessary in order to ensure that logarithmic ratios (log-odds) remain defined,
	that numerical stability \cite{higham2002accuracy} is preserved, and that well-definedness
	\cite{rudin1976principles, armstrong1988groups} is maintained.
	This assumption also provides a common domain for the entire conflict operator family.
\end{assumption}

\begin{definition}[Priority Structure and Matrix]
\label{def:priority-structual-matrix}
	Let each row $i = 1, \dots, r$ be a probability distribution.
	The interior simplex of the matrix space of dimension $r \times d$ is defined as:
	\[
		\RISimplex := \left\{
			\boldsymbol{X} \in \DataPos 
			\;\middle|\;
			\forall{i} \in \{ 1, \dots, r \},
			\sum_{j = 1}^{d} x_{ij} = 1
		\right\}
	\]

	From this, the priority/weight matrix (row-stochastic matrix),
	which determines the relative importance of features on a row-wise basis\footnote{
		The term ``priority matrix'' refers to the comparative ordering of rows;
		whereas the term ``weight matrix'' reflects the use of the same mathematical object
		in multi-criteria decision analysis.
		In this study, both terms are treated as synonymous.
	}, is defined as:
	\[
		\Priority \in \RISimplex
	\]
	
\end{definition}
\begin{remark}[Priority Normalization]
	In this study, each row of the priority matrix $\Priority$ is normalized such that it is defined
	over a probability simplex.

	This choice represents a modeling preference intended to express the relative priorities among
	criteria in an explicit and comparable manner.

	Nevertheless, the proposed conflict framework can, in principle, be extended to encompass
	alternative weighting regimes, such as non-normalized or sparse priority structures.
	Such generalizations may be investigated in future studies.
\end{remark}

\begin{definition}[Positivity and Measurability]
\label{def:positivity-and-measurability}
	According to this definition, for each instance $i$, ${\Priority}_{i, \cdot}$ is a normalized
	relative importance (priority) distribution defined over a probability simplex, and none of its
	components is zero (${\Priority}_{ij} > 0$).

	In this study, data is treated as a deterministic yet normalized hierarchy of relative importance
	that adopts the fundamental structural principles of normative utility in decision theory
	at the modeling level \cite{utility, luce}.

	Accordingly, the random-variable and sampling-based probabilistic framework commonly adopted
	in the classical statistical literature
	\cite{feller1968introduction, casella2002statistical}
	is not regarded as a necessary or foundational assumption of this study.

	Within the conflict framework, the raw and contextual data matrices ($\Raw, \Context$)
	and the priority/weight matrix ($\Priority$) satisfy the following condition:
	\[
		{\Raw}_{ij}\!>\!0, \;
		{\Context}_{ij}\!>\!0, \;
		{\Priority}_{ij}\!>\!0 \quad
		\forall{i}\!\in\!\{\!1,\!\dots,\!r \}, \;
		\forall{j}\!\in\!\{\!1,\!\dots,\!d \}
	\]

	The positivity constraint enables the distinction between the absence of conflict ($g = 0$)
	and directional dominance to be established in a mathematically consistent manner, and renders
	the domain of the conflict operators well-defined
	\cite{rudin1976principles, armstrong1988groups}.

	This positive definiteness is directly compatible with the fact that entropy functions and their
	derivatives in information theory are well-defined only over positive probability distributions
	\cite{entropy, cover-thomas}.

	In this context, the exclusion of zero-valued components guarantees that conflict analysis is
	performed within an ``information-rich'' representation space (i.e., a space in which all
	probability components are positive, ensuring that entropy functions and their derivatives remain
	well-defined), and provides a common and stable domain for the entire operator family.
\end{definition}


\section{Axiomatic Justification for Conflict Operators}
In this section, the defined conflict operator family
(see Section~\ref{sec:conflict-operator-family})
$\Act = \{ g_1, \; g_2, \; g_3 \}$ is constructed not arbitrarily, but upon an explicit axiomatic framework
concerning the \textbf{minimal mathematical principles that the notion of conflict must satisfy}.

The objective is to clearly establish which operators can meaningfully represent conflict within this framework.

In this context, the following axioms are regarded as necessary for treating conflict as a
\textbf{local, directional, and context-independent} mathematical object.

\begin{axiom}[Domain Compatibility and Admissibility]
\label{ax:conflict-operator-admissibility}
	An admissible conflict operator $g$ must be defined only over positive quantities
	(see Definition~\ref{def:positivity-and-measurability}):
	\[
		g : \left( \real_{> 0} \right)^{2} \to \real
	\]
\end{axiom}
\begin{rationale}
	The fact that the raw and contextual data matrices ($\Raw, \Context$) are defined over
	$\DataEpsilon$ (see Section~\ref{sec:inputs-and-representation-spaces}) and that the priority
	structure is restricted to the interior simplex \textbf{necessarily requires} the conflict kernel
	to operate within a representation space containing no zero-valued components.

	This guarantees the consistency of scale-invariant, normalized ratio-based ($g_1$), and logarithmic
	($g_2$) operators, while excluding domain-related problems such as division by zero and undefinedness.
\end{rationale}

\begin{axiom}[Zero-Conflict Consistency]
\label{ax:conflict-operator-zero-conflict-consistency}
	A conflict operator must \textbf{not produce conflict for identical inputs}:
	\[
		g( x, x ) = 0, \quad \forall{x} > 0
	\]
\end{axiom}
\begin{rationale}
	This enforces the absence of conflict (zero baseline) under equality; however, it does not require
	conflict to emerge whenever the inputs differ ($x \neq y$).

	Accordingly, conflict operators may produce the result $g( x, y ) = 0$ under sensitivity thresholds,
	contextual filters, or measurement tolerances.

	This distinction provides theoretical flexibility for directional dominance, signed conflict,
	and context-sensitive models.

	Otherwise, a structure in which identical values ($x = y$) could produce conflict would eliminate
	the reference point of the conflict notion itself, thereby violating definitional consistency.
\end{rationale}
\begin{remark}
	In this study, the absence of conflict is considered at two distinct levels.

	First, the \textbf{strict zero-conflict} condition arises when the raw and contextual values are
	identical, in which case the conflict measure is necessarily zero.

	However, in practical applications, certain tolerance or filtering mechanisms may be employed.
	In such cases, a \textbf{contextual zero-conflict} condition may arise; that is, values may be
	considered negligible or ignorable by the context even if they are not exactly equal.

	This distinction separates the mathematical properties of the conflict kernel from the decision
	mechanisms employed within the application context.
\end{remark}

\begin{axiom}[Antisymmetry]
\label{ax:conflict-operator-antisymmetry}
	Conflict is a directional quantity, and this directionality is modeled through the antisymmetry
	condition:
	\[
		g( x, y ) = -g( y, x ), \quad \forall{x, y} > 0
	\]

	Here, $x$ and $y$ represent non-directional quantitative inputs; the direction of conflict is
	produced solely by the conflict operator $g( \cdot, \cdot )$.
\end{axiom}
\begin{rationale}
	The relationship between raw and contextual data ($\Raw, \Context$) is not symmetric.

	This requires the conflict measure to encode not only magnitude, but also which component is dominant.

	Without such modeling, conflict reduces to symmetric (non-directional) distances, resulting in
	information loss in decision contexts.
\end{rationale}

\begin{axiom}[Local Continuity (Stability)]
\label{ax:conflict-operator-stability}
	Every admissible conflict operator is continuous over its domain:
	\[
		g \in C^{0} \left( \left( \real_{> 0} \right)^{2}, \real \right)
	\]
\end{axiom}
\begin{rationale}
	Small changes in the data should not produce uncontrolled discontinuities in conflict measurements.

	This axiom directly supports numerical stability, comparability, and integration with gradient-based
	methods.

	Although this axiom minimally requires $C^{0}$ continuity, the members of the proposed operator
	family $\Act$ ($g_1$, $g_2$, $g_3$) possess $C^{\infty}$ regularity (infinite differentiability)
	over their domains.

	This demonstrates that the framework is fully compatible with higher-order optimization methods.
\end{rationale}

\begin{axiom}[Explicit Scale Behavior]
\label{ax:conflict-operator-scale-behavior}
	The scale behavior of a conflict operator (scale-invariant, proportional, or absolute scale-sensitive)
	must be explicitly classifiable under one of these regimes according to its response to input scaling.
\end{axiom}
\begin{rationale}
	This guarantees that the response of the selected operator to unit changes in the data is known
	in advance.

	An operator with ambiguous scale behavior produces comparability problems across datasets or contexts
	with different magnitudes.
	For example, while the operator $g_1$ targets a dimensionless analysis, $g_3$ preserves absolute
	deviations.

	This explicitness is critical for the correct interpretation of conflict results within decision-making
	processes.
\end{rationale}

\subsection{Interpretation of the Fundamental Axioms for Conflict Operators}
\label{sub:basic-axioms-for-conflict-operators}
In this study, the axioms for conflict operators should be interpreted as reflections of the fundamental
mathematical principles inherent to the notion of conflict itself, independent of any specific operator.

In this context, \emph{these axioms are not operator-specific technical constraints}.
\begin{itemize}
	\item Axiom~\ref{ax:conflict-operator-admissibility}: Domain compatibility
	(positivity and definitional consistency),

	\item Axiom~\ref{ax:conflict-operator-zero-conflict-consistency}: Zero-conflict consistency
	(zero conflict for identical inputs),

	\item Axiom~\ref{ax:conflict-operator-antisymmetry}: Directionality and antisymmetry,

	\item Axiom~\ref{ax:conflict-operator-stability}: Local continuity and numerical stability,

	\item Axiom~\ref{ax:conflict-operator-scale-behavior}: Explicit scale behavior.
\end{itemize}

These principles are already satisfied by each element of the conflict operator family defined in
Section~\ref{sec:conflict-operator-family}, and the corresponding axioms are explicitly reflected
both in the operator definitions and in
Section~\ref{sec:conflict-operator-family-axiom-compatiblility}.

\begin{remark}[Necessity of the Joint Satisfaction of the Axioms]
The joint satisfaction of these axioms constitutes the minimal structural conditions that enable
a directional, context-sensitive, and scale-regime-separated representation of conflict.
\end{remark}

\subsection{Axiomatic Consequences: Fundamental Propositions}
\label{sub:axiomatic-consequences}
The propositions presented in this subsection do not introduce new axiomatic assumptions;
rather, under the given axioms, they formally characterize the structural behaviors expected
from conflict operators, optionally through additional operator properties such as
``discriminativity.''

The following propositions make explicit the fundamental structural behaviors that operators
within the conflict operator family $\Act$ are required to exhibit under the axioms.

\begin{definition}[Discriminative Conflict Operators]
\label{def:discriminative-conflict-operators}
	A conflict operator $g$ is called discriminative if
	\[
		g( x, y ) = 0 \quad \implies \quad x = y, \qquad \forall{x, y} > 0
	\]
\end{definition}

\begin{proposition}[Characterization of Conflict Absence by Identity]
\label{prop:conflict-free-consistency}
	For every discriminative operator $g \in \Act$ within the conflict operator family $\Act$
	defined under
	Axiom~\ref{ax:conflict-operator-admissibility}
	and
	Axiom~\ref{ax:conflict-operator-zero-conflict-consistency},
	the following holds:
	\[
		g( x, y ) = 0 \quad \iff \quad x = y
	\]
\end{proposition}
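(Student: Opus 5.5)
The plan is a direct two-direction argument in which each implication comes from exactly one hypothesis. Throughout, $x, y > 0$ are arbitrary. By Axiom~\ref{ax:conflict-operator-admissibility}, $g$ is defined on $\left(\real_{>0}\right)^{2}$, so $g(x,y)$ is a well-defined real number. This is the only role of the admissibility axiom: it makes both sides of the equivalence meaningful on the whole domain.

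For the direction ($\Leftarrow$), assume $x = y$. Then $g(x,y) = g(x,x)$, and Axiom~\ref{ax:conflict-operator-zero-conflict-consistency} gives $g(x,x) = 0$ for every $x > 0$. Hence $g(x,y) = 0$. No discriminativity is needed here; this half holds for every admissible operator in $\Act$.

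For the direction ($\Rightarrow$), assume $g(x,y) = 0$. Since $g$ is assumed discriminative in the sense of Definition~\ref{def:discriminative-conflict-operators}, the defining implication $g(x,y)=0 \implies x=y$ applies verbatim and yields $x = y$. Combining the two directions gives the stated equivalence for all $x, y > 0$.

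There is no real technical obstacle; the main point is bookkeeping about which hypothesis carries which implication. The forward direction genuinely needs discriminativity and cannot be obtained from the axioms alone. As the rationale after Axiom~\ref{ax:conflict-operator-zero-conflict-consistency} notes, an operator may vanish off the diagonal, for instance through thresholding. I would add a brief remark that neither Axiom~\ref{ax:conflict-operator-antisymmetry} nor Axiom~\ref{ax:conflict-operator-stability} is used. Antisymmetry alone already forces $g(x,x) = -g(x,x)$, so it would give an alternative route to the ($\Leftarrow$) direction, but it is not required.
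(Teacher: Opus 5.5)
Your proof is correct and matches the paper's argument, which is only an informal rationale: Axiom~\ref{ax:conflict-operator-zero-conflict-consistency} gives the ($\Leftarrow$) direction, and the ($\Rightarrow$) direction is exactly the discriminativity condition of Definition~\ref{def:discriminative-conflict-operators}. Your version is cleaner than the paper's, because it states explicitly that the forward implication comes from the discriminativity hypothesis rather than from the axioms.
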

\begin{rationale}
	Axiom~\ref{ax:conflict-operator-zero-conflict-consistency} guarantees that identical inputs
	produce zero conflict.

	In addition, for the notion of conflict to be interpreted as a discriminative and directional
	quantity, zero conflict must occur only for identical inputs.

	Otherwise, the production of zero conflict between non-identical inputs ($x \neq y$) would
	eliminate the discriminative capability of the conflict measure, rendering directional comparison
	and dominance interpretations meaningless.

	This proposition explicitly clarifies under which conditions the absence of conflict should be
	accepted for the conflict operator family considered in this study.
\end{rationale}

\begin{proposition}[Directional Consistency]
\label{prop:conflict-directional-consistency}
	Under Axiom~\ref{ax:conflict-operator-antisymmetry}, the conflict operator is directionally
	consistent; that is:
	\[
		g( x, y ) > 0 \quad \iff \quad g( y, x ) < 0
	\]
\end{proposition}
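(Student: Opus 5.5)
The argument is a direct consequence of Axiom~\ref{ax:conflict-operator-antisymmetry}; no continuity, scale, or discriminativity assumption is needed. Fix arbitrary $x, y > 0$, so that $(x,y)$ and $(y,x)$ both lie in the domain $(\real_{>0})^2$ guaranteed by Axiom~\ref{ax:conflict-operator-admissibility}. By antisymmetry $g(y,x) = -g(x,y)$, so the sign of $g(y,x)$ is exactly the negation of the sign of $g(x,y)$. The task reduces to turning this identity into the two implications.

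\emph{Forward direction.} Assume $g(x,y) > 0$. Multiplying by $-1$ reverses the strict inequality in the ordered field $\real$, giving $-g(x,y) < 0$. Substituting the antisymmetry identity yields $g(y,x) < 0$.

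\emph{Reverse direction.} Assume $g(y,x) < 0$. Apply Axiom~\ref{ax:conflict-operator-antisymmetry} with the roles of the arguments exchanged, i.e.\ $g(x,y) = -g(y,x)$. This is legitimate because the axiom is quantified over all pairs of positive reals. Negating the strict inequality then gives $g(x,y) > 0$.

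I would close with a remark that the same reasoning gives $g(x,y) < 0 \iff g(y,x) > 0$. Moreover, $g(x,y) = 0 \iff g(y,x) = 0$, since $0 = -0$. Taken together, the direction of dominance is encoded consistently under swapping the arguments, and zero conflict is swap-invariant, consistent with Proposition~\ref{prop:conflict-free-consistency}.

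There is no genuine obstacle here. The only point needing care is that the reverse implication uses the symmetric quantification in the axiom, that is, antisymmetry applied to $(y,x)$ rather than to $(x,y)$; I would state this explicitly rather than leave it implicit.
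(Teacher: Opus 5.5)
Your proof is correct and uses the same single ingredient as the paper, Axiom~\ref{ax:conflict-operator-antisymmetry}; the paper gives only an informal rationale (antisymmetry forces the sign to flip when the arguments are swapped), so your sign-reversal argument is the rigorous version of what the paper intends. One small simplification: the reverse direction does not need the axiom applied at the swapped pair, because $g(y,x) = -g(x,y)$ rearranges directly to $g(x,y) = -g(y,x)$.
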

\begin{rationale}
	The axiom \nameref*{ax:conflict-operator-antisymmetry} requires conflict to encode not only
	magnitude, but also direction.

	This structure enables dominance and opposition relationships to be distinguished directionally.
	Without antisymmetry, conflict measurement reduces to a non-directional difference function.
\end{rationale}

\begin{proposition}[Local Stability]
\label{prop:conflict-local-stability}
	Under Axiom~\ref{ax:conflict-operator-stability}, the conflict operator
	$g : ( \real_{> 0} )^{2} \to \real$ is continuous at every point of its domain.

	In other words, sufficiently small perturbations in the input space produce arbitrarily small
	changes in the conflict value:
	\[
		\forall{( x_0, y_0 )} \in ( \real_{> 0} )^{2},
		\forall{\varepsilon > 0}, \; \exists \delta > 0
	\]
	such that
	\[
		\| ( x, y ) - ( x_0, y_0 ) \| < \delta
		\Rightarrow | g( x, y ) - g( x_0, y_0 ) | < \varepsilon
	\]

	Accordingly, the conflict kernel $\Phi^{(g)}$ defines a locally stable transformation.

	By the component-wise definition of the conflict kernel $\Phi^{(g)}$
	(Equation~\ref{eq:conflict-kernel-definition}),
	$\Phi^{(g)}$ is obtained through finitely many applications and algebraic combinations
	of the continuous function $g$.

	Since finite compositions and combinations of continuous functions preserve continuity,
	the continuity of $g$ guarantees the continuity of $\Phi^{(g)}$.
\end{proposition}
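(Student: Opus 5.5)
The first part of the statement needs no argument beyond unpacking definitions. Axiom~\ref{ax:conflict-operator-stability} asserts $g \in C^{0}((\real_{>0})^{2},\real)$. Continuity on the open set $(\real_{>0})^{2}$ with the Euclidean norm is, by definition, the $\varepsilon$--$\delta$ condition stated at every point $(x_0,y_0)$. So I will simply restate this. I will also note that, because $(\real_{>0})^{2}$ is open, one may shrink $\delta$ so that the ball around $(x_0,y_0)$ stays inside the domain. This keeps $g(x,y)$ defined for every perturbed input, which is where Axiom~\ref{ax:conflict-operator-admissibility} and the positivity convention of Definition~\ref{def:positivity-and-measurability} enter.

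The substantive part is the transfer to the kernel $\Phi^{(g)}$. Definition~\ref{def:conflict-kernel-operator} and Equation~\ref{eq:conflict-kernel-definition} come later in the paper. I will assume, as the statement indicates, that $\Phi^{(g)}$ acts component-wise. Concretely, $\Phi^{(g)}(\Raw,\Context)_{ij}$ is built from $g(\Raw_{ij},\Context_{ij})$, possibly multiplied by or combined algebraically with the continuous entries of $\Priority$. The plan is then three steps:
\begin{enumerate}[(i)]
\item The coordinate projections $(\Raw,\Context)\mapsto(\Raw_{ij},\Context_{ij})$ from $\DataEpsilon\times\DataEpsilon\subset\real^{2rd}$ into $(\real_{>0})^{2}$ are linear, hence continuous. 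Their image is positive because every entry is at least $\epsilon>0$.
\item Each component $(\Raw,\Context)\mapsto g(\Raw_{ij},\Context_{ij})$ is a composition of continuous maps, hence continuous.
\item Finite sums and products with other continuous entries (e.g.\ $\Priority_{ij}$) preserve continuity. A map into $\real^{r\times d}$ is continuous iff each of its finitely many coordinates is.
\end{enumerate}
This yields continuity of $\Phi^{(g)}$ on $\DataEpsilon\times\DataEpsilon$, with the subspace topology, and hence local stability.

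The main obstacle is that the exact form of Equation~\ref{eq:conflict-kernel-definition} is not yet visible. If it involves division, for example a normalization, I will need a denominator bounded away from zero. I would secure this from the positivity of $\Priority$ on the interior simplex and the lower bound $\epsilon$ on the data. If the kernel involves non-algebraic operations such as $\log$ or $\exp$, these are continuous on positive arguments, so the same composition argument applies. I would state the result with this proviso, namely that the kernel uses only finitely many continuous operations on positive, well-defined arguments.
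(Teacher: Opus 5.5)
Your proposal is correct and follows essentially the same route as the paper. The first part is just Axiom~\ref{ax:conflict-operator-stability} unpacked into the $\varepsilon$--$\delta$ definition. Continuity of $\Phi^{(g)}$ then follows because each entry $\Phi^{(g)}(\Raw,\Context)_{ij} = g(\Raw_{ij},\Context_{ij})$ is $g$ composed with a continuous coordinate projection; you spell out the projection and componentwise-continuity steps that the paper compresses into one sentence. Your proviso about $\Priority$ or denominators is unnecessary, since Equation~\ref{eq:conflict-kernel-definition} defines the kernel as plain entrywise application of $g$, and $\Priority$ enters only later through the Hadamard modulation.
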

\begin{rationale}
	This result guarantees not only that the conflict operator is well-defined, but also that it
	produces stable representations under measurement errors, numerical noise, and data perturbations.

	Accordingly, conflict measurement can be used as a reliable quantity in decision-support and
	optimization contexts.
\end{rationale}
	

 	\section{Conflict Operator Family}
 	\label{sec:conflict-operator-family}
	In this section, the operator structure determining the quantitative properties of conflict is defined,
	and the conflict operator family consisting of admissible conflict operators is introduced axiomatically.

 	This definition is kept sufficiently general to allow operators that can model conflict as symmetric
 	or antisymmetric, and whose output range may be bounded or unbounded, provided that they satisfy
 	the admissibility axioms defined in this study.	
 	
 	The conflict operators introduced in this section are based on fundamental mathematical transformations
 	that have appeared in different contexts in the literature.
 	For example, transformations such as the normalized difference (see Eq.~\ref{eq:canonical-scale-invariant-conflict}),
 	the log-ratio (see Eq.~\ref{eq:log-odds-conflict}), and the simple difference (see Eq.~\ref{eq:raw-difference-conflict})
 	are widely used in various fields such as measurement theory, odds-ratio analysis, and comparative
 	evaluation problems.
	 	
 	However, the aim of this study is not to propose a new divergence or metric.
	Instead, it defines \textbf{an axiomatic conflict operator family} within which such transformations,
	which arise in different contexts in the literature, can be situated.
 	
 	The proposed framework does not impose a specific conflict function. This approach makes it possible
 	to treat different conflict kernels within a common axiomatic framework. Thus, processes such as conflict
 	measurement, contextual priority modulation, and decision projection are separated from one another,
 	yielding a modular analysis structure.
 	
 	\begin{definition}[Admissible Conflict Operator]
	 	A conflict operator $g$ is defined as a function that transforms the discrepancy between two
	 	positive scalar quantities into a directional and quantitative measure:
	 	\[
	 		g : \left( \real_{> 0} \right)^{2} \to \real
	 	\]
 	\end{definition}
 	
 	This definition is deliberately kept broad; additional regularity or symmetry conditions may be imposed
 	in specific applications.
	
	\begin{definition}[Operator Family]
	\label{def:conflict-operator-family}
		Here, three fundamental operators are defined in order to examine conflict measures with different
		scale, boundedness, and symmetry properties under a single framework:
		\[
			\Act = \{ g_1, \; g_2, \; g_3 \}
		\]
	\end{definition}
	\begin{enumerate}
		\item \textbf{Canonical Scale-Invariant Conflict ($g_1$):}
		\begin{equation}
		\label{eq:canonical-scale-invariant-conflict}
			g_1( x, y ) = \frac{x - y}{x + y} \in ( -1, +1 )
		\end{equation}
		
		This operator is a \textbf{canonical conflict measure} because it expresses the magnitude of conflict
		within a normalized interval and remains invariant under positive scale transformations.
		
		The boundedness of the output within the interval $(-1, +1)$ enables comparability across different
		rows and contexts.
		
		The operator has the property of \textit{antisymmetry}:
		\[
			g_1( x, y ) = -g_1( y, x )
		\]
		which allows conflict to be interpreted as a directional (signed) quantity.
		
		The operator $g_1( x, y )$ has the following properties:
		\begin{itemize}
			\item Scale invariance,
			\item Boundedness: $g_1( x, y ) \in ( -1, +1 )$,
			\item Continuity and differentiability (for $x, y > 0$),
			\item Antisymmetric structure.
		\end{itemize}
		
		These properties make it possible for the conflict score to be used stably together with
		priority weights ($\Priority$), both as a normalized and directional quantity.
	
		\item \textbf{Logarithmic Ratio (Log-odds, $g_2$):}
			\begin{equation}
			\label{eq:log-odds-conflict}
				g_2( x, y ) = \ln{\left( \frac{x}{y} \right)} \in \real
			\end{equation}
		
			This operator ($\ln$, the natural logarithm) expresses the proportional difference between two
			positive quantities in logarithmic space.
			
			The unboundedness of the output enables high-amplitude or exponential-scale conflicts to be
			represented without being suppressed.
			
			For this reason, $g_2$ focuses not on absolute differences, but on relative dominance relations,
			and in this respect it is directly compatible with information-theoretic and odds-ratio
			(log-odds)-based analyses.
			
			The operator has the property of \textit{antisymmetry}:
			\[
				g_2( x, y ) = -g_2( y, x )
			\]
			which enables the direction of conflict (which component is dominant) to be explicitly encoded.
	
			The operator $g_2( x, y )$ has the following properties:
			\begin{itemize}
				\item Scale invariance,
				\item Ratio-sensitivity,
				\item Antisymmetric structure,
				\item Continuity and differentiability (for $x, y > 0$),
				\item Unbounded output space: $g_2( x, y ) \in \real$.
			\end{itemize}
			
			These properties make $g_2$ a suitable choice in high-contrast or information-theoretic contexts
			where the direction of conflict is more decisive than its magnitude.
			
		\item \textbf{Raw Difference:}
			\begin{equation}
			\label{eq:raw-difference-conflict}
				g_3( x, y ) = x - y \in \real
			\end{equation}
			This operator measures the conflict between two quantities directly through the \textit{raw difference}.
			
			Since it does not involve scaling or normalization, it is used as a reference (baseline) conflict
			measure in cases where the preservation of quantities with physical meaning (energy, cost,
			density, etc.) is important.
			
			The operator has the property of \textit{antisymmetry}:
			\[
				g_3( x, y ) = -g_3( y, x )
			\]
			however, unlike the other operators, it directly depends on the absolute scale of the quantities.	
			
			The operator $g_3( x, y )$ has the following properties:
			\begin{itemize}
				\item Linear structure,
				\item Antisymmetric structure,
				\item Continuity and differentiability,
				\item Scale dependence.
			\end{itemize}
	\end{enumerate}
	\begin{remark}[Interpretation of Operator Selection]
		The defined conflict operator family $\Act = \{ g_1, \; g_2, \; g_3 \}$ has been selected to cover
		fundamental conflict regimes ranging from scale-independent normalized measures to proportional and
		information-theoretic expressions, as well as absolute-scale linear differences.
		
		This selection enables the framework to be adapted to different application contexts without being
		tied to a specific measure.
	\end{remark}
	
	\subsection{Fundamental Mathematical Properties of the Conflict Family}

	\begin{proposition}[Well-Definedness of Conflict Operators]
	\label{prop:well-defined}
		Let $\Act = \{ g_1, \; g_2, \; g_3 \}$. Each operator $g \in \Act$ is a single-valued
		and deterministic function:
		\[
			g : \left( \real_{> 0} \right)^{2} \to \real
		\]

		$\RSpace^{(g)}$ represents the image set of the operator; the transformations $\Phi^{(g)}$
		and $\Gamma^{(g)}$ defined in the subsequent sections are constructed over this space.
		
		Therefore, the structures $\Phi^{(g)}$ and $\Gamma^{(g)}$ hierarchically constructed over the
		operator family $\Act$ are well-defined over the entire domain and produce unique outputs for
		identical inputs.
	\end{proposition}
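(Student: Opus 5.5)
The plan is to verify the claim operator by operator. For each $g \in \Act$ we check that, for every $(x,y) \in (\real_{>0})^2$, the defining expression in Equations~\eqref{eq:canonical-scale-invariant-conflict}--\eqref{eq:raw-difference-conflict} is a finite real number. Because each $g$ is given by an explicit closed-form formula built from field operations and the real logarithm, that value is automatically unique, so single-valuedness and determinism follow at once. The only real content is that every operation appearing in the formula is defined on the positive quadrant, as Axiom~\ref{ax:conflict-operator-admissibility} and Definition~\ref{def:positivity-and-measurability} require.

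The three operators are handled as follows.
\begin{itemize}
\item For $g_3(x,y)=x-y$, subtraction is a total function $\real^2\to\real$, so there is nothing to check.
\item For $g_1(x,y)=(x-y)/(x+y)$, we use $x,y>0$ to get $x+y>0$, so the denominator never vanishes and the quotient is a well-defined real number. We would also record the bound $|x-y|<x+y$, which gives the range $(-1,1)$ and hence $\RSpace^{(g_1)}\subseteq(-1,1)$.
\item For $g_2(x,y)=\ln(x/y)$, we use $y>0$ to see that $x/y$ is defined, and $x>0$ to see that $x/y>0$. Since $\ln$ is a single-valued function on $\real_{>0}$, $g_2(x,y)\in\real$.
\end{itemize}
In each case we then set $\RSpace^{(g)} := g\big((\real_{>0})^2\big)\subseteq\real$. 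For the matrix inputs of Definition~\ref{def:data-matrices}, all entries lie in $[\epsilon,\infty)\subset\real_{>0}$, so the entrywise applications stay inside this domain.

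For the hierarchical part of the statement, we argue that $\Phi^{(g)}$ and $\Gamma^{(g)}$ are obtained by applying $g$ componentwise to the entries $(\Raw_{ij},\Context_{ij})$ and then combining the results with $\Priority$ through finitely many algebraic operations (products and sums), in the same way as in Proposition~\ref{prop:conflict-local-stability}. A composition of single-valued maps is single-valued, so identical inputs $(\Raw,\Context,\Priority)$ produce identical outputs.

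The main obstacle is this last step. The definitions of $\Phi^{(g)}$ and $\Gamma^{(g)}$ appear only later in the paper, so the argument has to rely on their announced structure: componentwise application of $g$ followed by finite algebraic combination. If some later construction divides by an aggregate (for example, a normalization by a row sum of conflict values), that denominator could vanish even though $g$ itself is well-defined. We would therefore restrict the claim to those constructions whose only nonlinear ingredient is $g$, and note that any normalization has to be checked separately. Positivity of $\Priority$ is not needed here; it only matters for such later normalizations.
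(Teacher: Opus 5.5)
Your proposal matches the paper's proof in substance. The paper uses the same case check: $x+y>0$ and $|x-y|<x+y$ for $g_1$, $x/y>0$ for $g_2$, and closure of subtraction for $g_3$, after which it simply asserts that $\Phi^{(g)}$ and $\Gamma^{(g)}$ inherit well-definedness.

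Your caveat about later constructions is too restrictive, and one of your remarks is off.
\begin{itemize}
\item The paper does not exclude nonlinear or normalizing steps such as the C-softmax projection used later. It only requires $\Psi:\RSpace^{(g)}\to\Output$ to be a well-defined total map, as in the framework-level proposition, and that is all the composition argument needs.
\item The constraint on $\Priority$ is not irrelevant, contrary to your remark. For $g_1$, the simplex bound $0<\Priority_{ij}\le 1$ is exactly what keeps $\Priority\odot\Phi^{(g_1)}$ inside $(-1,1)^{r\times d}=\RSpace^{(g_1)}$, so that $\Psi$ can be applied at all. Strict positivity is not needed, but nonnegativity together with unit row sums is.
\end{itemize}
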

	\begin{proof}
		Let $x, y \in \real_{> 0}$:
		\begin{enumerate}[(i)]
			\item \textbf{Canonical Scale-Invariant Conflict ($g_1$):}
			\[
				g_1( x, y ) = \frac{x - y}{x + y}
			\]
			
			Since the denominator satisfies $x + y > 0$, $g_1$ is defined over the entire domain.
			
			Moreover,
			\[
				| x - y | < x + y \quad \Rightarrow \quad g_1( x, y ) \in ( -1, +1 ), 
			\]
			and therefore the outputs of $g_1$ are compatible with $\RSpace^{(g_1)}$.
			
			\item \textbf{Logarithmic Ratio (Log-odds, $g_2$):}
			Since $x / y > 0$ for $x, y > 0$,
			\[
				g_2( x, y ) = \ln \left( \frac{x}{y} \right)
			\]
			is well-defined over the entire domain and its outputs lie in $\RSpace^{(g_2)}$.
			
			\item \textbf{Raw Difference ($g_3$):}
			\[
				g_3( x, y ) = x - y
			\]
			its outputs are compatible with $\RSpace^{(g_3)}$ since subtraction is closed over the real
			numbers.
		\end{enumerate}
		
		Thus, each member of the operator family $\Act$ is compatible with the predefined output spaces
		and ensures definitional stability for the conflict kernel $\Phi^{(g)}$ and the framework
		$\Gamma^{(g)}$.
	\end{proof} 
	\begin{remark}[Interpretation and Framework-Level Meaning]
		This proposition ensures not only that the conflict operators are functionally defined, but also
		that the output spaces remain stable throughout the kernel-framework construction.		
		
		Thus, conflict measurements can be hierarchically extended without introducing definitional ambiguity.
	\end{remark}
	
	\begin{proposition}[Boundedness of Conflict Operators]
	\label{prop:boundedness}
		The canonical scale-invariant conflict operator $g_1$,
		\[
			g_1( x, y ) = \frac{x - y}{x + y}, \quad x, y \in \real_{> 0}
		\]
		is strictly bounded under its definition:
		\[
			g_1( x, y ) \in ( -1, +1 )
		\]
		
		In contrast, the output spaces of the logarithmic ratio operator $g_2$ and the raw difference
		operator $g_3$ are unbounded over $\real$.
	\end{proposition}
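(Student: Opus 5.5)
The statement has two parts, and I would prove them separately: strict boundedness of $g_1$, and unboundedness of $g_2$ and $g_3$ over $\real$.

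\textbf{Boundedness of $g_1$.} Fix $x, y > 0$, so that $x + y > 0$. The claim $g_1(x,y) \in (-1,+1)$ is equivalent to $|x - y| < x + y$. Squaring both sides, this becomes $(x+y)^2 - (x-y)^2 = 4xy > 0$, which holds because both inputs are strictly positive. This is exactly the positivity requirement of Definition~\ref{def:positivity-and-measurability}.

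Since the image lies in the \emph{open} interval, I would also show that the bounds are sharp, so that $(-1,+1)$ is the exact image. Writing $t = x/y > 0$ and using scale invariance gives
\[
	g_1(x,y) = \frac{t-1}{t+1}.
\]
This map is continuous and strictly increasing in $t$ on $(0,\infty)$, with limit $-1$ as $t \to 0^+$ and limit $+1$ as $t \to \infty$. By the intermediate value theorem its image is exactly $(-1,+1)$.

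\textbf{Unboundedness of $g_2$ and $g_3$.} Here it suffices to show surjectivity onto $\real$.
\begin{itemize}
	\item For $g_2$: given any $c \in \real$, take $x = e^{c}$ and $y = 1$. Both are positive and $g_2(x,y) = c$.
	\item For $g_3$: given any $c \in \real$, take $y = 1 + |c|$ and $x = y + c$. Then $x \ge 1 > 0$ and $g_3(x,y) = c$.
\end{itemize}
Hence no finite bound exists for either operator.

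One caveat: if the entries are restricted to $\DataEpsilon$, the same constructions still work once everything is scaled by $\epsilon$, for example $y = \epsilon$, $x = \epsilon e^{c}$ for $g_2$.

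\textbf{Expected obstacle.} There is no real difficulty. The only point needing care is the strictness of the bounds for $g_1$, that is, that $\pm 1$ is never attained. This depends entirely on $xy > 0$, and I would state explicitly that this fails if zero entries are allowed.
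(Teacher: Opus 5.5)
Your proof is correct and follows the same route as the paper: $|x-y|<x+y$ for $g_1$, and the ratio or difference escaping to infinity for $g_2,g_3$. It is more rigorous, since the paper only asserts $|x-y|<x+y$ and argues unboundedness informally, whereas you derive the inequality from $4xy>0$, show that the image of $g_1$ is exactly $(-1,+1)$, and give explicit preimages.

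One small slip is in your side caveat: taking $x=\epsilon e^{c}$, $y=\epsilon$ violates $x\ge\epsilon$ when $c<0$. Use $x=\epsilon e^{\max(c,0)}$, $y=\epsilon e^{\max(-c,0)}$ instead; this still gives $g_2(x,y)=c$ with both entries in $[\epsilon,\infty)$.
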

	\begin{proof}
		Let $x, y \in \real_{> 0}$. Then,
		\[
			| x - y | < x + y
		\]
		always holds. 
		
		Therefore,
		\[
			-1 < \frac{x - y}{x + y} < 1
		\]
		and hence $g_1$ is strictly bounded.		
		
		On the other hand, since the ratio $x / y$ in the expression
		$g_2( x, y ) = \ln \left( \dfrac{x}{y} \right)$ can approach $0$ or $+\infty$,
		the outputs of $g_2$ are unbounded over $\real$.		
		
		Similarly, for the expression $g_3( x, y ) = x - y$, since $x$ and $y$ can grow independently,
		the output set is unbounded.
	\end{proof}
	\begin{remark}[Distinction Between Normalized and Absolute Conflict]
		This proposition shows that the conflict operator family brings together normalized
		(scale-invariant) and absolute (scale-sensitive) conflict measurements under the same framework.		
		
		This distinction enables transitions between the contextual interpretation of conflict and
		absolute magnitude analysis.
	\end{remark}
	
	\begin{proposition}[Weighted Conflict Conservation]
	\label{prop:conservation}
		Let $\Priority \in \RISimplex$. Under the weighted conflict structure defined by the Hadamard
		product ($\odot$),
		\[
			\Priority \; \odot \; \Phi^{(g)}( \Raw, \Context )
		\]
		each local conflict component $( i, j )$ is scaled only by its corresponding priority weight
		$\Priority_{ij}$, and no inter-component interference occurs.
		
		Therefore, the local conflict representation prior to aggregation is structurally preserved.
	\end{proposition}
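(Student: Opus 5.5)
The plan is to work directly from the component-wise definition of the conflict kernel, $\Phi^{(g)}(\Raw,\Context)_{ij} = g(\Raw_{ij},\Context_{ij})$ (Equation~\ref{eq:conflict-kernel-definition}), together with the definition of the Hadamard product. First I invoke Proposition~\ref{prop:well-defined}: for every $g\in\Act$ and every pair $(\Raw,\Context)\in\DataEpsilon\times\DataEpsilon$, each entry $g(\Raw_{ij},\Context_{ij})$ is a well-defined real number. Hence $\Phi^{(g)}(\Raw,\Context)\in\real^{r\times d}$ has the same shape as $\Priority\in\RISimplex$, and the Hadamard product is defined.

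Next I write out the $(i,j)$ entry explicitly:
\[
\bigl(\Priority\odot\Phi^{(g)}(\Raw,\Context)\bigr)_{ij}=\Priority_{ij}\,g(\Raw_{ij},\Context_{ij}).
\]
This formula carries the locality claim. The right-hand side depends only on the triple $(\Priority_{ij},\Raw_{ij},\Context_{ij})$. Formally, if two input triples $(\Priority,\Raw,\Context)$ and $(\Priority',\Raw',\Context')$ agree at position $(i,j)$, then their weighted outputs agree at $(i,j)$, whatever the other entries are. This is exactly the absence of inter-component interference.

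For structural preservation, I use $\Priority_{ij}>0$ (Definition~\ref{def:positivity-and-measurability}). The map $t\mapsto\Priority_{ij}t$ is then a strictly increasing bijection of $\real$ fixing $0$. It follows that:
\begin{itemize}
\item each local conflict value is recoverable as $g(\Raw_{ij},\Context_{ij})=\bigl(\Priority\odot\Phi^{(g)}\bigr)_{ij}/\Priority_{ij}$;
\item the sign, and hence the direction given by Axiom~\ref{ax:conflict-operator-antisymmetry} and Proposition~\ref{prop:conflict-directional-consistency}, is preserved;
\item zero conflict is preserved both ways, so Axiom~\ref{ax:conflict-operator-zero-conflict-consistency} and, for discriminative $g$, Proposition~\ref{prop:conflict-free-consistency} still apply after weighting.
\end{itemize}
Equivalently, the weighting acts as the diagonal linear operator $\operatorname{diag}(\operatorname{vec}\Priority)$ on $\operatorname{vec}\Phi^{(g)}$. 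It is invertible and has no off-diagonal terms, which restates the no-interference property in linear-algebra terms.

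The main obstacle is not a calculation but making precise what ``no inter-component interference'' and ``structurally preserved'' mean, since the statement leaves both informal. I will fix them as two properties: (a) the $(i,j)$ output is a function of the $(i,j)$ inputs only, and (b) the weighted entries determine the unweighted local conflict values, signs and zeros entrywise. With these definitions the proof reduces to the displayed entrywise formula plus strict positivity of $\Priority$. I will also note that the row-stochastic constraint is not needed beyond positivity; it only matters later, at the aggregation stage.
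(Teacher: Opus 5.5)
Your proposal is correct and follows the same route as the paper. Both arguments rest on the entrywise identity $(\Priority\odot\Phi^{(g)}(\Raw,\Context))_{ij}=\Priority_{ij}\,g(\Raw_{ij},\Context_{ij})$, from which the absence of cross-position contributions is immediate, and the paper likewise notes that row-stochastic normalization does not affect this independence. Your extra use of $\Priority_{ij}>0$ to show the weighting is an invertible, sign- and zero-preserving diagonal scaling gives ``structurally preserved'' a more precise meaning than the paper does, but it refines the same argument rather than replacing it.
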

	\begin{proof}
		Since the Hadamard product is defined component-wise (element-wise), for each $( i, j )$,
		\[
			\left( 
				\Priority \odot \Phi^{(g)}( \Raw, \Context ) 
			\right)_{ij} = \Priority_{ij} \cdot \Phi^{(g)}( \Raw, \Context )_{ij}
		\]
		which shows that no component $( k, \ell ) \neq ( i, j )$ contributes to the output at position
		$( i, j )$.
		
		Thus, the weighting operation creates only a local scaling effect.
		By the definition $\Priority \in \RISimplex$, for each row $i$,
		$\sum_{j = 1}^{d} \Priority_{ij} = 1$ and $\Priority_{ij} > 0$ hold; however, this normalization
		does not disrupt the independence between different $( i, j )$ positions.
		
		Consequently, the weighting operation causes neither global redistribution nor structural distortion;
		local conflict information is preserved by being modulated only by its own priority weight.
	\end{proof}
	

	\section{Axiomatic Compatibility of the Current Operator Family}
	\label{sec:conflict-operator-family-axiom-compatiblility}
	The defined operator family
	\[
		\Act = \{ g_1, \; g_2, \; g_3 \}
	\]
	satisfies \textbf{all} the axioms given in
	Section~\ref{sec:conflict-operator-family} and Section~\ref{sub:basic-axioms-for-conflict-operators}:
	\begin{itemize}
		\item $g_1$: Scale-invariant, bounded, antisymmetric $\to$ 
		\textbf{normalized comparative conflict}.
		
		\item $g_2$: Ratio-sensitive, scale-invariant, unbounded, antisymmetric $\to$
		\textbf{information-theoretic / log-odds regime}.
		
		\item $g_3$: Linear, absolute-scale, antisymmetric $\to$
		\textbf{physical or quantitative reference regime}.
	\end{itemize}
	(See Axiom~\ref{ax:conflict-operator-admissibility}, 
	\ref{ax:conflict-operator-zero-conflict-consistency}, \ref{ax:conflict-operator-antisymmetry},
	\ref{ax:conflict-operator-stability}, \ref{ax:conflict-operator-scale-behavior}.)	
	This triplet constitutes a \textbf{minimal yet representative} set of operators satisfying all axioms. 
	
	Other operators may be added to this family; however, transformations that violate any of the axioms
	cannot be regarded as ``conflict operators.''


	\section{Conflict Kernel}
	\label{sec:conflict-kernel}
	In this section, it is defined how the conflict between the raw data ($\Raw$) and contextual data
	($\Context$) matrices is generated systematically and in a structured manner through a selected
	conflict operator.
	
	For this purpose, a \textit{conflict kernel}, which computes conflict in an instance-level and
	feature-sensitive manner, is introduced.
	
	\begin{definition}[Output Space Induced by the Conflict Operator]
	Let an admissible conflict operator $g \in \Act$ be fixed.
	
	The output space ${\RSpace}^{(g)}$ induced by this operator is defined as the set of all possible
	values that $g$ can take:
	\begin{equation}
	\label{eq:output-space-for-g}
		{\RSpace}^{(g)} := \begin{cases}
			( -1, +1 )^{r \times d} & g = g_1, \\
			\Data & g = g_2, g_3
		\end{cases}
	\end{equation}
	
	This definition creates a distinct output geometry depending on whether the conflict operator produces
	outputs in a scaled or raw data space.
	\end{definition}
	
	\begin{definition}[Conflict Kernel]
	\label{def:conflict-kernel-operator}
		For a selected conflict operator $g \in \Act$, the conflict kernel is a transformation defined as:
		\begin{equation}
		\label{eq:conflict-kernel-formal}
			\Phi^{(g)} : \left( \DataEpsilon \right)^{2} \to \RSpace^{(g)}
		\end{equation}
		
		Here, $\RSpace^{(g)}$ denotes the representation space defined according to the output character
		of the selected operator.
		
		The kernel operates at each position $(i, j)$ as follows:
		\begin{equation}
		\label{eq:conflict-kernel-definition}
			\Phi^{(g)}( \Raw, \Context )_{ij} \; := \; g\!\left( {\Raw}_{ij}, {\Context}_{ij} \right)
		\end{equation}		 
		
		This transformation encodes the local conflict between the raw data ($\Raw$) and contextual data
		($\Context$) in accordance with the structure of the selected operator.
	\end{definition}	
		

	\section{Fusion Operator}
	\label{sec:fusion-operator}
	Although the conflict kernel ($\Phi^{(g)}$) reveals in detail the local and operator-dependent
	conflict structure between decision dimensions and criteria, this structure does not directly produce
	a global decision signal.
	
	Therefore, conflict information must be reduced to a single or low-dimensional representation, or
	projected into specific decision contexts. This requirement makes it necessary to define a
	\textbf{fusion operator} that enables the conflict matrix to be treated holistically.
	
	The fusion operator is treated as an abstract transformation that brings together the multidimensional
	conflict information produced by ($\Phi^{(g)}$) in a manner compatible with the priority structure
	and the application context.
	
	This operator may or may not be linear; it is not required to be reduced to a fixed functional form.
	On the contrary, in the proposed framework, the fusion step is deliberately kept flexible in order to
	provide an adaptable structure for different decision strategies, projection geometries, and application
	scenarios.
	
	\begin{definition}[Fusion Operator]
	\label{def:fusion-operator}
		The fusion operator is a transformation that maps the output ($\Output$) in the conflict space
		generated depending on the selected conflict operator ($g$) to a higher-level representation or
		decision space:
		\[
			\Psi : {\RSpace}^{(g)} \to \Output,
			\quad \text{where} \quad
			\Output \in \left\{ \Data, \real^{r}, \Simplex \right\}
		\]
	
		This operator may preserve, reduce, normalize, or combine local conflict values under a probabilistic
		structure.
	\end{definition}
	\begin{remark}[Minimum Conditions for the Fusion Operator]
		The operator $\Psi$ is a mechanism that enables the conflict matrix to be interpreted according
		to the application context or transformed into a higher-level representation space. 
	
		The form of this operator may vary depending on the problem domain; however, in practical
		applications, it is generally expected to satisfy basic well-posedness properties such as continuity,
		stable behavior with respect to changes in the inputs, and the production of bounded outputs from
		bounded inputs.
	\end{remark}
	

	\section{Conflict Framework}
	\label{sec:conflict-framework-generalized}
	When the data representations, priority structure
	(see Section~\ref{sec:inputs-and-representation-spaces}),
	conflict operator family (see Section~\ref{sec:conflict-operator-family}), and fusion mechanism
	(see Section~\ref{sec:fusion-operator}) defined in the previous sections are considered together,
	they enable the construction of a closed and generalizable conflict model. 	
	
	In this section, a \textbf{generalized} and \textbf{application-independent} conflict framework obtained
	through the composition of these components is introduced. The proposed framework formulates conflict
	as an operator-based and contextual transformation process modulated by priority weights.
	This approach makes it possible to represent both the local (component-level) and global
	(decision-level) effects of the notion of conflict within the same mathematical structure, while ensuring
	that the framework remains compatible with practical requirements such as differentiability,
	computational complexity, and extensibility.
	
	The idea of contextual modulation has gained an increasingly central role in modern artificial
	intelligence systems through attention and adaptive weighting mechanisms \cite{vaswani2017attention}.
	However, in the proposed framework, the modulation process is carried out through explicitly defined
	priority operators rather than learned attention distributions.
	
	\begin{definition}[Conflict Framework]
		For a selected member $g \in \Act$ of the admissible conflict operator family, the
		\textit{generalized conflict framework} is defined as
		\[
			{\Gamma}^{(g)} : \left( \DataEpsilon \right)^{2} \times \RISimplex \; \to \; \Output
		\]
		
		Given raw and contextual data matrices $\Raw, \Context \in \DataEpsilon$ and a priority matrix
		$\Priority \in \RISimplex$, the framework $\Gamma^{(g)}$ is defined by the following operator
		composition:
		\[
			\Gamma^{(g)}_{\Phi, \Psi}( \Raw, \Context, \Priority ) \; := \;
			\Psi \! \left(
				\Priority \odot \Phi^{(g)}( \Raw, \Context )
			\right)
		\]
		
		Here:
		\begin{itemize}
			\item \textbf{$\Phi^{(g)}$ (Internal Conflict Transformation):}
			A deterministic transformation that maps the local discrepancy between raw data and context
			into a ``conflict kernel space'' determined by the structural properties of the selected conflict
			operator $g$ (scale invariance, antisymmetry, etc.).
			
			\item \textbf{$\Priority \odot \Phi^{(g)}$ (Hadamard Modulation):}
			Enables the computed local conflicts to be weighted at the feature level according to the priority
			regime of the decision maker or the system. This step incorporates not only the presence of
			conflict, but also its contextual importance into the model.
			
			\item \textbf{$\Psi$ (External Projection):} 
			Maps the modulated conflict information to an output space appropriate to the nature of the
			problem (score, probability distribution, or gradient correction).
		\end{itemize}
		
		This structure enables different conflict operators to be treated within a single unified formulation
		under different priority regimes and projection mechanisms.
	\end{definition}
	
	For the proposed operator $\Gamma^{(g)}$, if the total number of elements is taken as $N = r \times d$,
	the computational complexity will be of order $O( N )$.	This linear complexity enables the framework to
	operate stably and scalably on large-scale datasets and high-dimensional representation spaces.
	
	The differentiability of the selected operators $g \in \{ g_1, g_2, g_3 \}$ and of commonly preferred
	$\Psi$ projections (e.g., linear summation, softmax) allows the framework $\Gamma^{(g)}$ to be directly
	integrated into end-to-end learning architectures and gradient-based optimization processes.

	\begin{proposition}[Framework-Level Well-Definedness of the Conflict Framework]
		Well-definedness means that every admissible input is
		\begin{enumerate}[(i)]
			\item processed without leaving the domain and
			\item mapped to a unique and determinate output.
		\end{enumerate}
		Accordingly, for any selected admissible conflict operator
		$g \in \Act = \{ g_1, \; g_2, \; g_3 \}$ and any well-defined fusion operator
		$\Psi$,
		\[
			\Gamma^{(g)}_{\Phi, \Psi} : \left( \DataEpsilon \right)^{2}
				\times \RISimplex
				\; \to \; \Output
		\]
		the generalized conflict framework defined in this form is well-defined.
		
		Hence, as long as the conflict operator $g$ is well-defined,		
		\begin{enumerate}
			\item the conflict kernel $\Phi^{(g)}$,
			\item the Hadamard modulation $\Priority \odot \Phi^{(g)}$,
			\item and the conflict framework $\Gamma^{(g)}_{\Phi, \Psi}$ defined by their composition
		\end{enumerate}
		are well-defined over their entire domains.
	\end{proposition}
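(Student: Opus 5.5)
The plan is to verify the two well-definedness conditions (i) and (ii) stage by stage along the composition $\Gamma^{(g)}_{\Phi,\Psi} = \Psi \circ (\Priority \odot \cdot) \circ \Phi^{(g)}$. At each stage I check that the image of the previous map lies inside the domain of the next one, and that each map is single-valued. Well-definedness of the whole composite then follows from the elementary fact that a composition of single-valued maps, each landing in the domain of its successor, is itself a single-valued map on the initial domain.

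\textbf{Step 1: the kernel.} Fix $(\Raw,\Context) \in (\DataEpsilon)^2$. Every entry satisfies $\Raw_{ij} \ge \epsilon > 0$ and $\Context_{ij} \ge \epsilon > 0$, so each pair $(\Raw_{ij},\Context_{ij})$ lies in $(\real_{>0})^2$, which is the domain required by Axiom~\ref{ax:conflict-operator-admissibility}. By Proposition~\ref{prop:well-defined}, $g(\Raw_{ij},\Context_{ij})$ is a unique real number. For $g = g_1$ it lies in $(-1,+1)$ by Proposition~\ref{prop:boundedness}. The entrywise definition \eqref{eq:conflict-kernel-definition} therefore produces a unique matrix in $\RSpace^{(g)}$ as defined in \eqref{eq:output-space-for-g}. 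This shows that $\Phi^{(g)}$ is well-defined.

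\textbf{Step 2: the modulation.} Given $\Priority \in \RISimplex$, the Hadamard product is computed entrywise as $\Priority_{ij}\,\Phi^{(g)}(\Raw,\Context)_{ij}$, as in the proof of Proposition~\ref{prop:conservation}. Real multiplication is single-valued, so the product is a unique $r\times d$ real matrix.

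The main obstacle lies here. $\Psi$ is declared on $\RSpace^{(g)}$, so I must show that the modulated matrix stays in $\RSpace^{(g)}$.
\begin{itemize}
\item For $g_2$ and $g_3$, $\RSpace^{(g)} = \Data$, and closure is trivial.
\item For $g_1$, the row-stochastic condition with positive entries gives $0 < \Priority_{ij} \le 1$; the strict bound $\Priority_{ij} < 1$ holds when $d \ge 2$. Hence $|\Priority_{ij}\,g_1(\cdot)| < 1$, and the modulated matrix lies in $(-1,+1)^{r\times d}$.
\end{itemize}
The domain-preservation condition (i) is thus satisfied in all three cases.

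\textbf{Step 3: the projection.} By hypothesis $\Psi$ is a well-defined map from $\RSpace^{(g)}$ to $\Output$. Applying it to the matrix obtained in Step 2 yields a unique element of $\Output$. Chaining Steps 1--3 establishes (i) and (ii) for $\Phi^{(g)}$, for $\Priority \odot \Phi^{(g)}$, and for $\Gamma^{(g)}_{\Phi,\Psi}$ on all of $(\DataEpsilon)^2 \times \RISimplex$.
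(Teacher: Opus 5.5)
Your proof is correct and follows the paper's stage-by-stage route: operator, kernel, Hadamard modulation, then composition with $\Psi$. It is also slightly more careful, since the paper never checks that $\Priority \odot \Phi^{(g)}(\Raw,\Context)$ lies in $\RSpace^{(g)}$ before applying $\Psi$, and you do; for $g_1$ that check needs only $0<\Priority_{ij}\le 1$ and $|g_1|<1$, so your $d\ge 2$ caveat is unnecessary.
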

	\begin{proof}
		The components constituting the conflict framework are well-defined sequentially.
		\begin{enumerate}[(i)]
			\item \textbf{Well-definedness of conflict operators:}
			Each operator $g \in \Act$ is defined as
			$g : \left( \real_{> 0} \right)^{2} \to \real$ and produces a unique output for all inputs
			$x, y > 0$
			(see Eq.~\ref{eq:canonical-scale-invariant-conflict}, \ref{eq:log-odds-conflict},
			\ref{eq:raw-difference-conflict}). Therefore, the conflict operator $g$ is well-defined.
			
			\item \textbf{Well-definedness of the conflict kernel:}
			Under the assumption $\Raw, \Context \in \DataEpsilon$ and the constraint $\epsilon > 0$,
			we have ${\Raw}_{ij} > 0, \quad {\Context}_{ij} > 0$ for every $(i, j)$.
			Therefore, the expression
			$\Phi^{(g)}( \Raw, \Context )_{ij} = g( {\Raw}_{ij}, {\Context}_{ij} )$
			is well-defined for each component and lies in the space $\RSpace^{(g)}$ depending on the
			selected operator.
			
			\item \textbf{Well-definedness of priority modulation:}
			Since the priority matrix $\Priority \in \RISimplex$, all its components are positive and finite:
			${\Priority}_{ij} > 0, \quad \forall{i, j}$.
			Since the Hadamard product ($\odot$) is an element-wise operation,
			the expression $\Priority \odot \Phi^{(g)}( \Raw, \Context )$ is well-defined for each $(i, j)$.
			
			\item \textbf{Composition with the fusion operator:}
			Under the assumption that the transformation $\Psi : \RSpace^{(g)} \to \Output$ is well-defined,
			the composition of functions is also well-defined.
			Hence,
			\[
				\Gamma^{(g)}_{\Phi, \Psi}( \Raw, \Context, \Priority ) = \Psi\!\left(
					\Priority \odot \Phi^{(g)}( \Raw, \Context )
				\right)
			\]
			produces a unique output for every input $( \Raw, \Context, \Priority )$.
		\end{enumerate}		
	\end{proof}
	
	\begin{remark}[Structural Decomposition Principle]
	\label{rem:structural-decomposition-principle}
		In order for conflict to be evaluated as directional, contextual, and priority-sensitive,
		the components of raw and contextual data ($\Raw, \Context$) and relative importance ($\Priority$)
		must be defined as structurally separate components rather than being combined under a single tensor
		or score.

		Conflict measurements performed under a singular representation entangle scale behavior,
		directional information, and contextual effect in a non-separable manner.		
		This makes it impossible to interpret the source of conflict and eliminates comparability across
		different application regimes. The proposed decomposition enables component-based and
		operator-sensitive analysis of conflict.
	\end{remark}
	
	\begin{remark}[Positivity and Well-Definedness Link]
	\label{rem:positivity-well-definedness-link}
		The positive definiteness of all data components within the conflict framework is necessary for
		the domain of conflict operators to remain well-defined.
		
		The presence of zero or negative components may cause ratio-based and information-based conflict
		operators to collapse definitionally or to produce meaningless boundary cases.
		
		The positivity constraint makes the distinction between the absence of conflict and directional
		dominance mathematically consistent and guarantees that the operator family produces values over a
		common representation space.
	\end{remark}
	
	\begin{remark}[Operator Family Approach]
		A single conflict measure cannot universally represent conflict behavior across different scale
		regimes and contextual scenarios. Therefore, conflict analysis should be carried out through an
		operator family constrained by axioms.
		
		Different application domains are sensitive to the absolute magnitude, relative ratio, or
		scale-independent structure of conflict. A single measure cannot simultaneously satisfy all these
		requirements. The operator family approach allows these different regimes to be modeled consciously
		and controllably.
	\end{remark}
	
	\begin{remark}[Measurement--Reduction Separation]
	\label{rem:separation-measurement-aggregation}
		The measurement of conflict and its reduction to a decision score must be kept conceptually separate.
		
		A conflict measure is a multidimensional object carrying directional, scale, and contextual
		information. Reducing this information to a single score at an early stage leads to irreversible
		information loss in the decision-making process.
		
		A separate reduction operator ($\Psi$) allows application-dependent decision strategies to be
		defined independently of conflict measurement.
	\end{remark}

	\begin{proposition}[Kernel Well-Definedness]
	\label{prop:kernel-well-definedness}
		Every conflict operator $g \in \Act$ satisfying the positivity and domain compatibility axioms
		is well-defined over the data representation space $\DataEpsilon$ and produces a finite,
		real-valued, unique output.
	\end{proposition}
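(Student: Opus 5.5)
The plan is to reduce the statement to a componentwise check and then verify each of the three operators of $\Act$ directly.

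\textbf{Step 1 (reduction to positive scalars).} Fix $\Raw, \Context \in \DataEpsilon$. By Definition~\ref{def:data-matrices}, every entry satisfies ${\Raw}_{ij} \ge \epsilon > 0$ and ${\Context}_{ij} \ge \epsilon > 0$. Hence each pair $({\Raw}_{ij}, {\Context}_{ij})$ lies in $(\real_{>0})^{2}$, which is exactly the domain required by Axiom~\ref{ax:conflict-operator-admissibility}. By Equation~\ref{eq:conflict-kernel-definition}, the kernel $\Phi^{(g)}(\Raw,\Context)$ is the $r \times d$ array whose $(i,j)$ entry is $g({\Raw}_{ij},{\Context}_{ij})$. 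Well-definedness of the kernel on $\DataEpsilon$ therefore follows from two facts about $g$ on $(\real_{>0})^2$: it is single-valued, and its values are finite real numbers. There are only finitely many ($rd$) entries, so no limiting argument is needed.

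\textbf{Step 2 (case check over $\Act$).} This step essentially repeats the argument of Proposition~\ref{prop:well-defined}, and I would cite that proposition.
\begin{itemize}
\item For $g_1$, the denominator satisfies $x+y \ge 2\epsilon > 0$. The quotient is therefore a unique real number, and $|x-y|<x+y$ places it in $(-1,1)$, which matches $\RSpace^{(g_1)}$ entrywise.
\item For $g_2$, the ratio satisfies $x/y \in [\epsilon/y, \infty) \subset \real_{>0}$. Since $\ln$ is a single-valued map $\real_{>0}\to\real$, $g_2(x,y)$ is a finite real number.
\item For $g_3$, the difference of two reals is a unique real number.
\end{itemize}
In each case the output is a single, deterministic, finite real value. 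Assembling the $rd$ entries gives a unique element of $\RSpace^{(g)}$ as defined in Equation~\ref{eq:output-space-for-g}, so $\Phi^{(g)}:(\DataEpsilon)^2\to\RSpace^{(g)}$ is a genuine function.

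\textbf{Main subtlety.} The only point needing care is the word ``finite'' for the unbounded operator $g_2$. Proposition~\ref{prop:boundedness} shows that $g_2$ is unbounded over its whole domain, so it is worth making explicit that unboundedness of the range does not conflict with finiteness at each point. Positivity of both arguments keeps $x/y$ strictly inside $(0,\infty)$ and away from the singularities of $\ln$ at $0$ and $\infty$. This is why $\epsilon>0$ is indispensable; Remark~\ref{rem:positivity-well-definedness-link} says the same. I would also note that membership in $\DataEpsilon$ gives more than pointwise positivity: it gives the quantitative bound $x/y \ge \epsilon/y$, although positivity alone already suffices for finiteness.
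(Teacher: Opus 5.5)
Your proposal is correct and follows essentially the same route as the paper. Membership in $\DataEpsilon$ makes every entry strictly positive, which excludes division by zero in $g_1$ and logarithmic undefinedness in $g_2$, so each $g \in \Act$ gives a unique finite real value at every position and $\Phi^{(g)}$ is well-defined. The paper only asserts this in two lines (and loosely phrases the domain compatibility axiom as a constraint on outputs rather than inputs); your case check mirroring Proposition~\ref{prop:well-defined}, and your separation of the unboundedness of $g_2$ from its pointwise finiteness, make the same argument complete.
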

	\begin{proof}
		The positivity constraint excludes domain problems such as division by zero, logarithmic undefinedness,
		and sign ambiguity.
		
		The domain compatibility axiom requires conflict operators to produce outputs only over
		$\real_{> 0}$. Under these conditions, every $g \in \Act$ defined over $\DataEpsilon$ constitutes
		a mathematically consistent and well-defined conflict kernel.
	\end{proof}
	
	\begin{proposition}[Scale Regime Separation]
	\label{prop:scale-regime-separation}
		The operators in the family $\Act = \{ g_1, \; g_2, \; g_3 \}$ represent distinct regimes in terms
		of their behavior under positive scalar scaling $T_c( x ) = cx$ ($c > 0$):
		$g_1$ and $g_2$ are scale-invariant, whereas $g_3$ is scale-sensitive.
		
		Therefore, these operators cannot be interpreted as reducible to a single operator within an
		equivalent equivariant/invariant behavior class under scaling.
	\end{proposition}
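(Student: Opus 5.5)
The plan is to verify the scaling behavior of each operator directly under the joint scaling $(x, y) \mapsto (cx, cy)$ with $c > 0$, and then turn the observed difference into a formal non-reducibility statement.

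\textbf{Step 1 (invariance of $g_1$ and $g_2$).} Fix $x, y > 0$ and $c > 0$. Since $c > 0$, the factor cancels in both numerator and denominator:
\[
	g_1(cx, cy) = \frac{c(x-y)}{c(x+y)} = g_1(x,y), \qquad g_2(cx, cy) = \ln\frac{cx}{cy} = g_2(x,y).
\]
Both expressions remain in the domain $(\real_{>0})^2$ of Axiom~\ref{ax:conflict-operator-admissibility}, as guaranteed by Proposition~\ref{prop:well-defined}.

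\textbf{Step 2 (equivariance, and failure of invariance, for $g_3$).} For the raw difference we get $g_3(cx, cy) = c\,(x - y) = c\, g_3(x,y)$, so $g_3$ is homogeneous of degree one. To show it is genuinely scale-sensitive, I will exhibit an explicit witness. Take $x = 2$, $y = 1$, $c = 2$: then $g_3(4, 2) = 2 \neq 1 = g_3(2, 1)$. The general statement is that $g_3(cx, cy) = g_3(x, y)$ forces $c = 1$ or $x = y$.

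\textbf{Step 3 (non-reducibility).} This is the step where the informal phrase ``cannot be reduced to a single operator within an equivalent class'' has to be given a precise meaning; I expect it to be the main obstacle. I will formalize it as follows. Suppose there were a map $h : \real \to \real$ with $g_3 = h \circ g$ for some $g \in \{g_1, g_2\}$. Then scale invariance of $g$ would transfer: $g_3(cx, cy) = h(g(cx, cy)) = h(g(x, y)) = g_3(x, y)$ for all $c$, contradicting the witness from Step 2.

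Conversely, I must rule out $g_1$ or $g_2$ being of the form $h \circ g_3$. Here I use the pairs $(1, 2)$ and $(2, 4)$:
\[
	g_3(1,2) = -1 \neq -2 = g_3(2,4), \qquad g_1(1,2) = g_1(2,4).
\]
This alone does not produce a contradiction. Instead I will exhibit two pairs with equal $g_3$ value but different $g_1$ and $g_2$ values, for example $(2, 1)$ and $(3, 2)$:
\[
	g_3(2,1) = g_3(3,2) = 1, \qquad g_1(2,1) = \tfrac{1}{3} \neq \tfrac{1}{5} = g_1(3,2),
\]
and likewise $\ln 2 \neq \ln(3/2)$. Hence no function $h$ can satisfy $g_1 = h \circ g_3$ or $g_2 = h \circ g_3$.

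Together, Steps 1 through 3 place $\{g_1, g_2\}$ and $\{g_3\}$ in different invariance classes under the scaling action $T_c$. This yields the scale-regime separation required by Axiom~\ref{ax:conflict-operator-scale-behavior}.
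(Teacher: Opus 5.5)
Your Steps 1 and 2 match the paper's proof. The paper computes $g_1(cx,cy)=g_1(x,y)$, $g_2(cx,cy)=g_2(x,y)$ and $g_3(cx,cy)=c\,g_3(x,y)$, and stops there. It treats the ``therefore'' clause as an immediate consequence of the differing scaling behavior. Your witness $(x,y,c)=(2,1,2)$ makes explicit the tacit requirement $c\neq 1$, $x\neq y$ behind the paper's ``hence it is not scale-invariant.''

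The genuine difference is Step 3. You give the non-reducibility clause a precise meaning: there is no $h$ with $g_3=h\circ g$ for $g\in\{g_1,g_2\}$, and no $h$ with $g_1=h\circ g_3$ or $g_2=h\circ g_3$. You then prove both directions correctly. The first follows by pushing invariance through $h$ against your witness. The second follows from the pairs $(2,1)$ and $(3,2)$, which share a raw difference but not a ratio. The paper's route is shorter but leaves the second sentence of the proposition as an unproved gloss; yours turns it into a checkable statement.

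Under your formalization $g_1$ and $g_2$ are mutually reducible, since $g_2=\ln\frac{1+g_1}{1-g_1}$. So the separation is between the classes $\{g_1,g_2\}$ and $\{g_3\}$, not among all three operators pairwise, and your conclusion is phrased exactly that way.

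Two cosmetic points. Delete the $(1,2)$, $(2,4)$ detour, since it establishes nothing. Also, the explicit-scale-behavior axiom only asks that each operator be classifiable. The proposition shows the family populates two regimes; it does not fulfill a separation ``required'' by that axiom.
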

	\begin{proof}
		For $c > 0$,
		$g_1( cx, cy ) = \frac{cx - cy}{cx + cy} = g_1( x, y )$ and
		$g_2( cx, cy ) = \ln\!\left( \frac{cx}{cy } \right) = \ln\!\left( \frac{x}{y} \right) = g_2( x, y )$
		are obtained.
		
		On the other hand, $g_3( cx, cy ) = cx - cy = c( x - y) = c\,g_3( x, y )$, and hence it is not
		scale-invariant.
	\end{proof}
	
	\begin{remark}[Role of the Axioms and Relaxation Discussion]
	\label{rem:role-of-axioms}
		The fundamental conflict axioms in this study (Axiom~\ref{ax:conflict-operator-admissibility}--
		\ref{ax:conflict-operator-scale-behavior}) provide the minimal structural guarantees required for
		conflict to remain a directional, well-defined, and comparable quantity.
		
		Relaxing any of these axioms may either expose conflict measurement to definitional problems
		(division by zero, logarithmic undefinedness, etc.) or make the interpretation of direction/scale
		ambiguous; therefore, such relaxations should be justified only through application-dependent
		additional assumptions.
	\end{remark}
		

	\section{Experimental Validation 1: Numerical Verification of Axiomatic Behaviors}
	
	\subsection{Objective of the Experiment}
	The objective of this experiment is to numerically verify the axiomatic behaviors and geometric
	properties of the proposed conflict operator family.
	
	In particular, the following properties are intended to be observed experimentally:	

	\begin{itemize}
		
		\item \textbf{Zero-Conflict Consistency:} 
		$g( x, x ) = 0$
		
		\item \textbf{Antisymmetry:} 
		$g( x, y ) = -g( y, x )$
		
		\item \textbf{Local Stability:} 
		Small changes in the input space do not produce discontinuous jumps in the output.
		
	\end{itemize}

	In addition, this experiment also aims to visualize that the selected operators generate
	\textbf{conflict geometries}.
	
	\subsection{Experimental Setup}
	The experiments were performed over the positive input space:
	\[
		( x, y ) \in [ a, b ]^{2}, \quad 0 < a < b
	\]
	For visualization purposes, the following interval was selected:
	\[
		a = 0.1, \quad b = 10
	\]
	This interval covers both small and large positive values, allowing the behavior of the operators
	to be observed with sufficient diversity.
	
	A \textbf{200x200 grid} was constructed over the experimental space, and each operator was evaluated
	on this grid.
	
	The operators examined are:
	\[
		g_1( x, y ) = \frac{x - y}{x + y}
	\]
	\[
		g_2( x, y ) = \ln \left( \frac{x}{y} \right)
	\]
	\[
		g_3( x, y ) = x - y
	\]
	
	The behavior of the operators was analyzed using both numerical metrics and contour/heat maps.
	
	\subsection{Numerical Verification Results}
	The experimental results are shown in tabular form below:
	
	\begin{table}[!ht]
		\resizebox{\columnwidth}{!}{%
			\begin{tabular}{|l|c|c|l|}
				\hline
				\rowcolor[HTML]{C0C0C0} 
				\textbf{Op.} & \textbf{max|g(x,x)} & \textbf{max|g(x,y)+g(y,x)} & \textbf{Behavior} \\ \hline
				$g_1$ & $\sim$0 & $\sim$0 & \begin{tabular}[c]{@{}l@{}}bounded, \\ normalized\end{tabular} \\ \hline
				$g_2$ & $\sim$0 & $\sim$0 & ratio-sensitive \\ \hline
				$g_3$ & $\sim$0 & $\sim$0 & linear \\ \hline
			\end{tabular}%
		}
		\caption{
			Numerical verification results of the proposed conflict operator family in terms of axiomatic
			properties.
			The table shows that the zero-conflict consistency ($g(x,x)=0$) and antisymmetry
			($g(x,y)=-g(y,x)$) conditions are satisfied at machine precision level for all operators, and
			that the operators produce different conflict behavior regimes.
		}
	\end{table}

	The results show that the \textbf{zero-conflict consistency} and \textbf{antisymmetry} properties are
	satisfied at machine precision level for all operators.
	
	This finding verifies that the proposed operators are consistent with the axiomatic framework defined
	in the paper.
		
	\subsection{Visualization of Operator Geometry}
	The contour maps of the three operators are respectively presented below.
	These visualizations clearly reveal the following geometric differences:
	
	\subsubsection*{\textbf{$g_1$ Operator}}
	\[
		g_1( x, y ) = \frac{x - y}{x + y}
	\]
	This operator produces a \textbf{normalized and bounded conflict surface}.
	The output range is bounded and relatively stable against scale changes.
	
	This property provides an advantage in cases where comparisons must be made across different magnitude
	scales.
	
	\par\medskip\noindent
	\begin{minipage}{\linewidth}
  		\centering
 		\resizebox{0.9\linewidth}{!}{\includegraphics{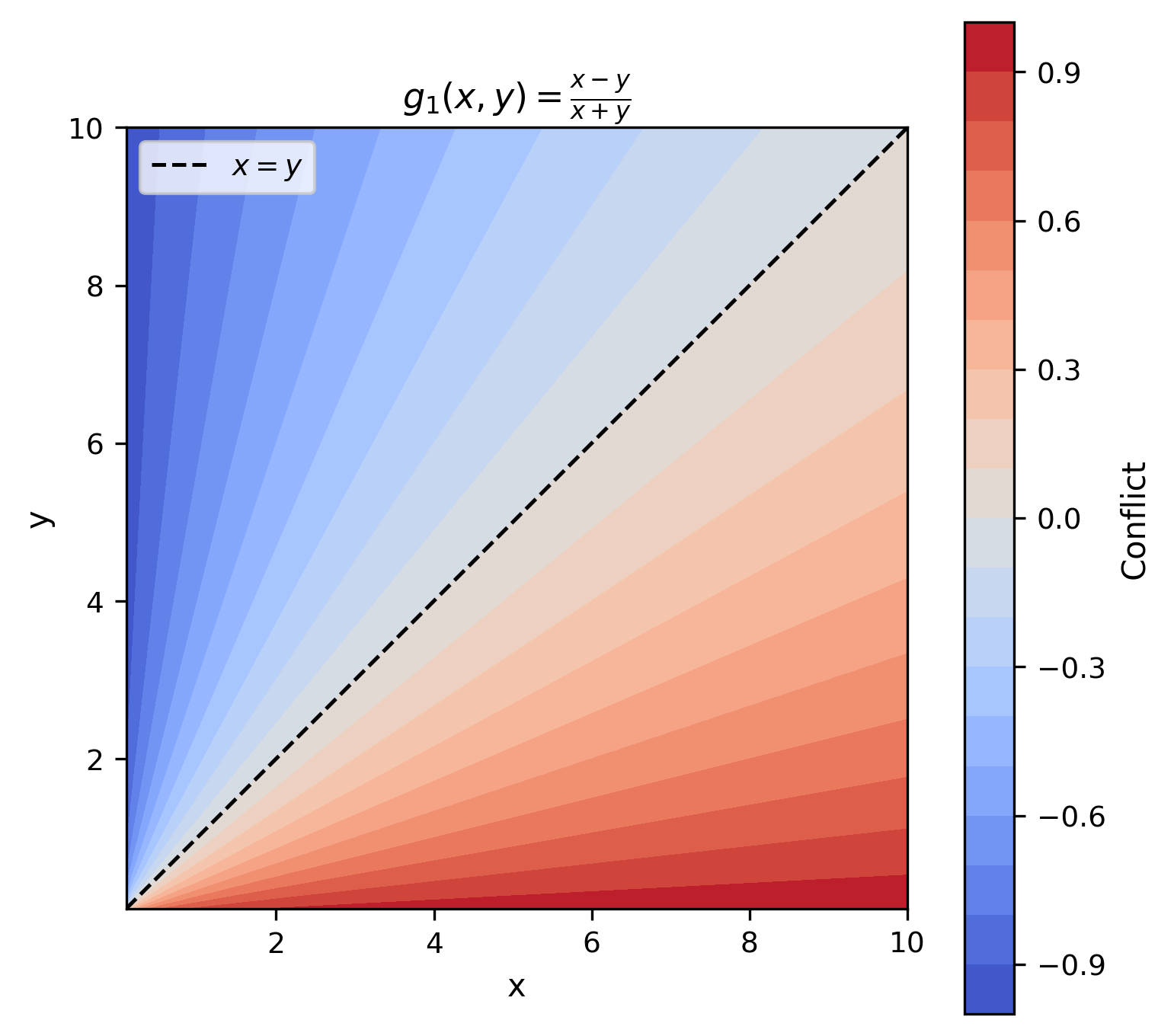}}
  		\captionof{figure}{
    		Contour/heat map of the operator $g_1( x, y ) = \frac{x - y}{x + y}$ over the positive input
    		space. The diagonal line $x = y$ indicates zero conflict.
    		The operator produces a bounded and normalized conflict surface.
  		}
  		\label{fig:experiment-1-operators-visualization-heatmap-g1}
	\end{minipage}
	\par\medskip 
	
	\subsubsection*{\textbf{$g_2$ Operator}}
	\[
		g_2( x, y ) = \ln \left( \frac{x}{y} \right)
	\]
	This operator measures conflict in a \textbf{ratio-based} manner.
	It is sensitive to magnitude ratios rather than absolute differences.
	
	Therefore, it produces a stronger response for large ratio differences and emphasizes relative changes.

	\par\medskip\noindent
	\begin{minipage}{\linewidth}
  		\centering
 		\resizebox{0.9\linewidth}{!}{\includegraphics{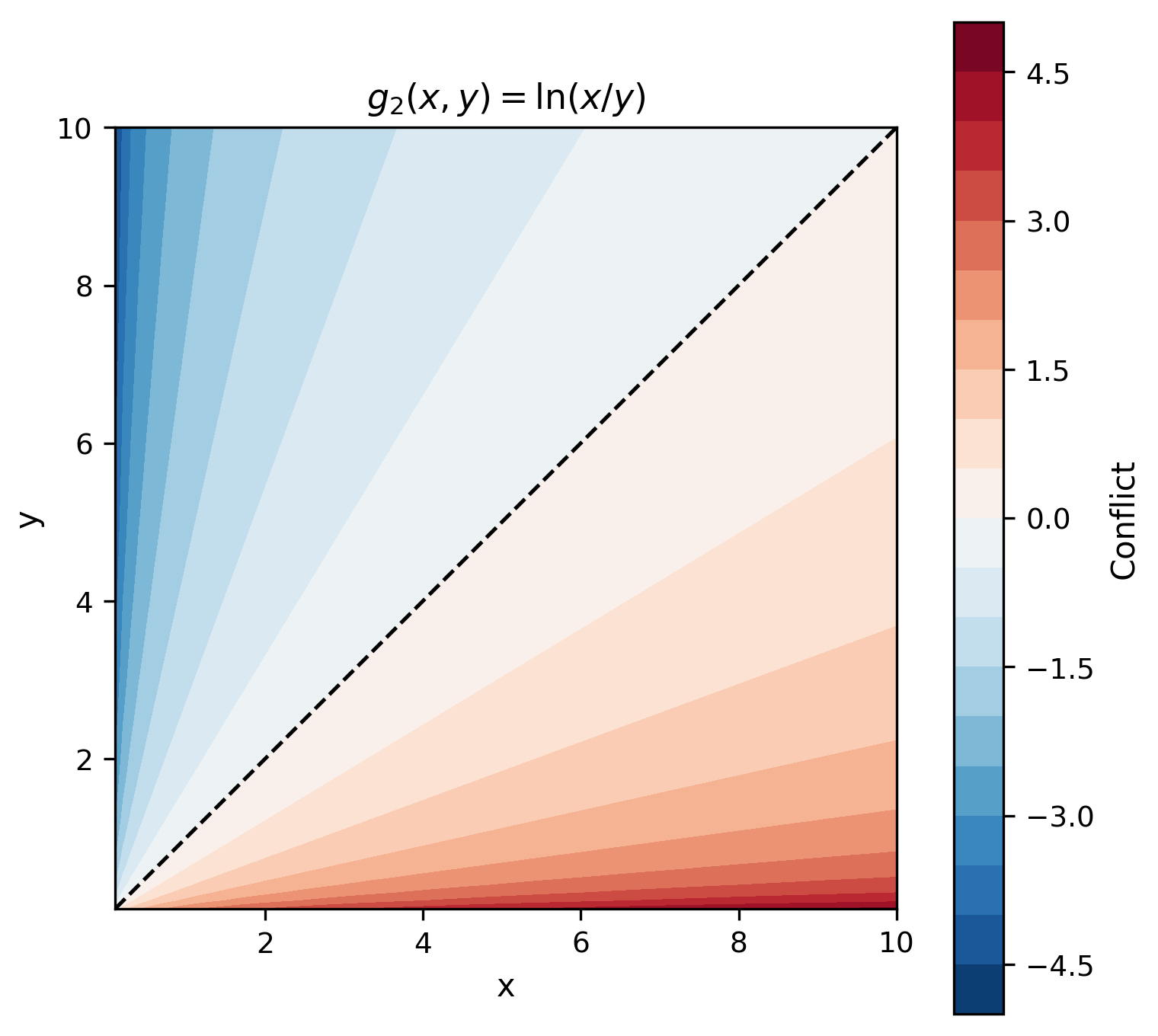}}
  		\captionof{figure}{
			Contour/heat map of the operator $g_2 = \ln \left( \frac{x}{y} \right)$.
			This operator measures conflict in a ratio-based manner and exhibits logarithmic growth behavior.
  		}
  		\label{fig:experiment-1-operators-visualization-heatmap-g2}
	\end{minipage}
	\par\medskip 

	\subsubsection*{\textbf{$g_3$ Operator}}
	\[
		g_3( x, y ) = x - y
	\]
	This operator measures conflict directly as a \textbf{raw difference}.
	It is sensitive to scale changes, and the output amplitude increases with the input magnitude.
	
	This property may be useful in applications where physical or absolute quantities carry meaning.

	\par\medskip\noindent
	\begin{minipage}{\linewidth}
  		\centering
 		\resizebox{0.9\linewidth}{!}{\includegraphics{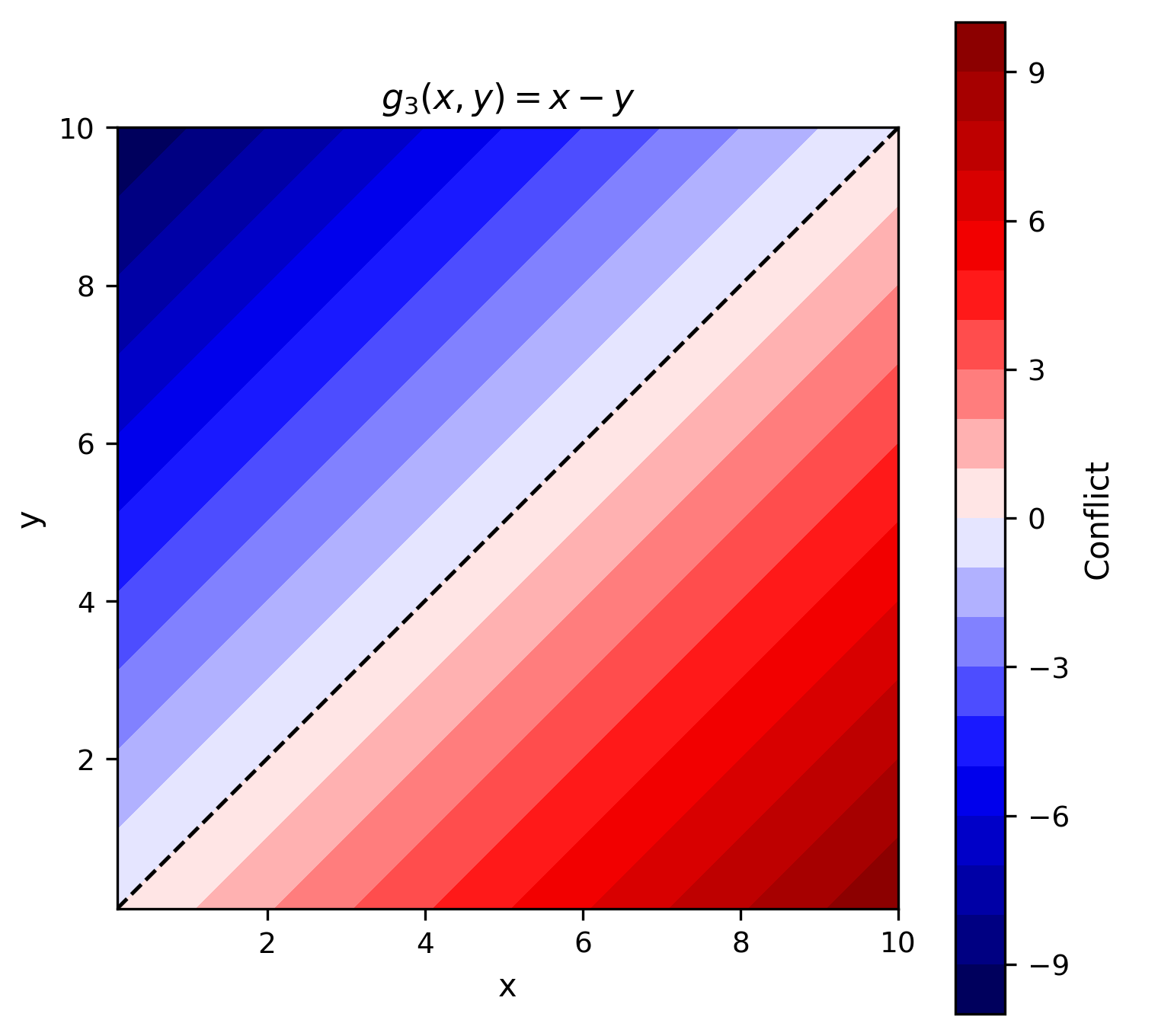}}
  		\captionof{figure}{
			Contour map of the operator $g_3( x, y ) = x - y$.
			This operator measures conflict as a raw difference and exhibits a linear structure.
  		}
  		\label{fig:experiment-1-operators-visualization-heatmap-g3}
	\end{minipage}
	\par\medskip 

	\subsection{Interpretation of the Experiment}
	The experimental results reveal three important observations:
	
	\begin{enumerate}
	
		\item All operators satisfy the \textbf{zero-conflict consistency} and \textbf{antisymmetry} axioms.
		
		\item The operators do not produce the same mathematical behavior; instead, they define
		\textbf{different conflict geometries}.
	
		\item These differences support the rationale for designing the proposed framework through an
		\textbf{operator family approach} rather than a single metric.

	\end{enumerate}
	
	These findings show that the proposed conflict framework is consistent with the theoretical structure
	and provides a flexible operator family capable of representing different interpretations of conflict.
	
	All computations, results, and visual outputs associated with this experimental verification are made
	openly accessible:
	\begin{itemize}
		\item \textbf{GitHub:} $\GitHubLink{Axiom-Level Behavioral Verification}$
		\item \textbf{Zenodo:} $\ZenodoLink$
	\end{itemize}

	\section{Experimental Validation 2: Contextual Projection with $\Csoftmax$}
	In this section, in order to demonstrate that the proposed framework is not merely an abstract formalism,
	the $\Csoftmax$ operator \cite{kartal2025csoftmax} is treated as a projection operator ($\Psi$).
	
	This choice also demonstrates that the framework can naturally integrate with context-sensitive
	decision projection mechanisms.
	
	The experiment was performed separately for conflict-free and conflict-aware projection, and the
	results were compared.
	
	The purpose of this experiment is not to propose a new softmax variant. Rather, within the proposed
	framework, the aim is to demonstrate that the modulated conflict representation can be transformed
	into a probability space through a context-sensitive projection operator.
	
	For this purpose, the raw representation matrix:
	\[
		\Raw = \begin{bmatrix}
			0.90 & 0.70 & 0.40 \\
			0.20 & 0.80 & 0.60 
		\end{bmatrix}
	\]
	and the contextual representation matrix:
	\[
		\Context = \begin{bmatrix}
			0.60 & 0.90 & 0.50 \\
			0.40 & 0.70 & 0.20 
		\end{bmatrix}
	\]
	were defined.	
	
	The $\Csoftmax$ parameters used during the projection stage were kept fixed for both experiments:
	\begin{align*}
		\vecomega & = [ 0.4, 0.3, 0.3 ] \\
		\vecbeta  & = [ 0, 0 ] \\
		\vecalpha & = [ 1, 1 ] \\
		\tau      & = 1
	\end{align*}
	
	\subsection{Conflict-Free Projection}
	In the first stage, the $\Csoftmax$ operator was directly applied to the raw representation matrix
	$\Raw$.
	That is, the operator input tensor was taken as:
	\[
		\Trxd = \Raw
	\]
	
	From this, the $\Csoftmax$ scores were computed as:
	\[
		s_i = \sum_{j = 1}^{d} \omega_j \Raw_{ij} + \beta_i
	\]
	
	As a result of the computation:
	\begin{align*}
		s_1 & = 0.4(0.90) + 0.3(0.70) + 0.3(0.40) = 0.69 \\
		s_2 & = 0.4(0.20) + 0.3(0.80) + 0.3(0.60) = 0.50 
	\end{align*}
	were obtained.
	
	When these scores were projected through the $\Csoftmax$ operator:
	\[
		p_i = \frac{
			\alpha_i \exp( s_i / \tau )
		}{
			\sum_{k = 1}^{r} \alpha_k \exp( s_k / \tau )
		}
	\]
	the resulting probability distribution:
	\[
		p_{\text{raw}} = [ 0.5474, 0.4526 ]
	\]
	was obtained.
	
	This result indicates that, within the raw representation space, the first row is more dominant than
	the second row.
	
	\subsection{Conflict-Aware Projection}
	In the second stage, the directional discrepancy between the raw representation and the contextual
	representation was computed using the proposed framework.
	
	For this purpose, the canonical conflict operator of the framework was used:
	\[
		g_1( x, y ) = \frac{x - y}{x + y}
	\]
	
	Thus, the conflict-aware representation tensor:
	\[
		\Phi^{(g_1)}( \Raw, \Context )_{ij} = \frac{
			\Raw_{ij} - \Context_{ij}
		}{
			\Raw_{ij} + \Context_{ij}
		}
	\]
	was obtained.
	
	As a result of the computation:
	\[
		\Phi^{(g_1)}( \Raw, \Context ) = \begin{bmatrix}
			0.20 & -0.125 & -0.111 \\
			-0.333 & 0.0667 & 0.50 
		\end{bmatrix}
	\]
	was obtained.
	
	Then, the row-wise normalized contextual modulation matrix:
	\[
		\Priority = \begin{bmatrix}
			0.5 & 0.3 & 0.2 \\
			0.2 & 0.3 & 0.5
		\end{bmatrix}
	\]
	was used.
	
	The modulated conflict-aware representation tensor computed via the Hadamard product:
	\[
		\Trxd^{(g_1)} = \Priority \odot \Phi^{(g_1)}( \Raw, \Context )
	\]
	was calculated as:
	\[
		\Trxd^{(g_1)} = \begin{bmatrix}
			0.10 & -0.0375 & -0.0222 \\
			-0.0666 & 0.02 & 0.25
		\end{bmatrix}
	\]
	
	This structure now constitutes a contextually modulated directional conflict representation.
	
	The obtained tensor was then projected using the same $\Csoftmax$ parameters:
	\[
		s_i = \sum_{j = 1}^{d} \omega_j \Trxd^{(g_1)}_{ij} + \beta_i
	\]
	
	Accordingly:
	\begin{align*}
		s_1 & = 0.4(0.10) + 0.3(-0.0375) + 0.3(-0.0222) = 0.0221 \\
		s_2 & = 0.4(-0.0666) + 0.3(0.02) + 0.3(0.25) = 0.0544
	\end{align*}
	score values were obtained.
	
	When these scores were projected through the $\Csoftmax$ operator:
	\[
		p_{\text{conf}} = [ 0.4919, 0.5081 ]
	\]
	the resulting probability distribution was obtained.
	
	\subsection{Comparison of Experimental Results}
	While the conflict-free projection result was:
	\[
		p_{\text{raw}} = [ 0.5474, 0.4526 ]
	\]
	the projection result obtained through the conflict-aware representation was:
	\[
		p_{\text{conf}} = [ 0.4919, 0.5081 ]
	\]
	
	In particular, the largest positive conflict value ($0.50$), corresponding to the third component
	of the second row, was weighted with high priority ($0.5$) under the contextual modulation matrix,
	thereby producing a dominant contribution within the conflict-aware representation space.
	Although the positive conflict value ($0.20$) corresponding to the first component of the first row
	was weighted with the same priority, it produced a more limited effect due to its smaller conflict
	magnitude.
	
	This demonstrates that contextual modulation can shape the geometry of conflict and thereby influence
	the decision projection process.

	Therefore, although the same $\Csoftmax$ parameters were used, the utilization of the conflict framework
	altered the decision geometry.
	The first row, which appeared more dominant in the raw representation space, lost its relative
	advantage after projection through the contextual conflict representation, and the decision geometry
	(the relative structure of the probability distribution) was reconfigured in favor of the second row.
	
	This demonstrates that the proposed framework can transfer not only data magnitudes, but also the
	directional discrepancy between raw data and contextual structure into the decision mechanism.
		
	\subsection{Conclusion}
	This experiment demonstrates that the proposed conflict framework can naturally integrate with
	context-sensitive projection operators and that the conflict-aware representation space can directly
	influence decision geometry.
	
	Thus, conflict behaves not merely as a secondary quantity emerging during optimization, but as a
	relational component capable of directly restructuring the representation space itself.
	

	\section{Discussion and Future Work}
	The conflict framework proposed in this study does not treat conflict merely as an implicit side effect
	or error term that must be optimized away; rather, it treats conflict as a directional, contextually
	modulated, and operator-based independent mathematical object.
	
	This approach structurally separates the processes of conflict measurement, contextual modulation through
	priority structures, and projection into decision/projection spaces, thereby offering a representation
	perspective distinct from classical loss-based or symmetric divergence-based approaches.
	
	The proposed structure is conceptually strongly related to representation disentanglement, contextual
	routing, explainable decision mechanisms, and modular learning architectures, all of which are becoming
	increasingly critical in modern artificial intelligence systems.
	
	In this context, conflict should not be regarded merely as ``noise'' or ``undesired optimization waste,''
	but rather as a structural signal carrying information about the contextual behavior of the system.
	
	One of the most important characteristics of the proposed framework is that it does not impose a single
	conflict metric.
	Instead, it adopts an axiomatic operator family approach representing different scale regimes.
	This approach makes it possible to handle normalized, ratio-sensitive, and absolute-scale conflict
	behaviors within the same mathematical framework. Thus, conflict analysis becomes adaptable to the nature
	of the problem domain rather than enforcing a specific application regime.
	
	The framework defined in this study, particularly when considered together with contextual projection
	operators, may provide a general mathematical infrastructure for future context-sensitive decision systems.
	
	In this direction, when considered together with the $\Csoftmax$ operator previously proposed by the
	author \cite{kartal2025csoftmax}, it becomes possible to transform the directional structural
	discrepancies produced by the conflict kernel into contextual probability projections.
	
	This may lead to the development of new projection structures capable of representing context, priority,
	and directional information within the same decision mechanism, in contrast to the classical softmax
	approach, which is primarily based on magnitude-driven competition structures.
	
	Similarly, the adaptation of the proposed conflict framework to multi-criteria decision-making (MCDM)
	problems constitutes another important research direction.
	When considered together with the $\CAHP$ and $\CAHPPP$ methods previously proposed by the author
	\cite{kartal2025cahpcahppp}, the systematic incorporation of conflict and contextual modulation into
	the decision process may allow the criterion priorities, which are generally treated as fixed and
	universal in classical AHP approaches, to be reformulated according to context, alternatives, and
	directional relational dynamics.
	
	Such an approach may pave the way for next-generation contextual decision systems that incorporate not
	only criterion weights, but also conflict geometry, contextual priority shifts, and alternative-based
	decision dynamics into the decision-making process.

	The potential application areas of the framework are not limited to decision-support systems (MCDM).
	In particular, it may open new research directions in fields such as representation learning,
	multi-task learning, attention mechanisms, adaptive weighting systems, and conflict-aware optimization
	problems.
	In addition, if the effect of context in biological systems can be modeled directionally and structurally,
	the proposed approach may also be extensible to bioinformatics.
	In particular, the fact that the same biological pattern may acquire different functional meanings under
	different contextual conditions makes context-sensitive modeling of conflict especially important.
	
	This study also opens a broader mathematical discussion concerning the long-standing ``meaning and
	context'' problems in artificial intelligence systems.
	Current large language models (LLMs) and representation systems are generally capable of statistically
	modeling contextual patterns; however, they do not explicitly and structurally represent the directional,
	priority-sensitive, and decomposed effects of context.
	Although the proposed framework does not claim to directly solve this problem, it may nevertheless offer
	a general representational perspective toward systematically incorporating context into mathematical models.
	
	Nevertheless, the present study also has several limitations.
	In particular, the proposed projection operator $\Psi$ has been intentionally kept general and was not
	restricted to a specific function family.
	This choice was made in order to increase the adaptability of the framework to different problem domains;
	however, the systematic investigation of the behavior of different projection families constitutes an
	important topic for future research.
	Similarly, the experimental validation section in this study primarily focused on demonstrating axiomatic
	behaviors and geometric properties.
	Large-scale experimental analyses on real-world datasets and application-dependent performance comparisons
	will also be addressed in future work.
	
	In conclusion, this study proposes a broader framework that treats conflict not merely as an optimization
	residue to be minimized, but as an independent mathematical representation object capable of carrying
	structural information.
	The proposed approach may contribute to establishing new theoretical connections between contextual
	structures, decision systems, representation learning, and conflict-aware artificial intelligence
	architectures.
	

	\section{Conclusion}
	In this study, a generalized and operator-based mathematical Conflict Framework ($\Gamma^{(g)}$)
	capable of systematically modeling the discrepancies between raw data and contextual data has been
	presented.
	The concept of conflict, which has traditionally been treated in the machine learning and
	multi-criteria decision-making (MCDM) literature as an implicit side effect of optimization losses,
	has been reformulated through this framework as an independent, directional, and context-sensitive
	mathematical object.

	The primary contribution of this work is the establishment of conflict analysis on an explicit
	axiomatic foundation independent of the constraints imposed by specific algorithms.
	By placing the ``Structural Decomposition Principle''
	(See Remark~\ref{rem:structural-decomposition-principle}) at the center of the framework, the
	processes of conflict measurement, directional characterization, priority modulation ($\Priority$),
	and transformation into decision space ($\Psi$) have been conceptually isolated from one another.
	This isolation prevents the tension between data representations from being prematurely reduced to a
	single scalar score, thereby avoiding irreversible information loss.

	Supported by the admissibility and positivity assumptions ($\epsilon > 0$), the proposed structure
	prevents the framework from collapsing at the definitional level, while the selected operator family
	$\Act = \{ g_1, g_2, g_3 \}$ successfully captures fundamental conflict regimes ranging from
	scale-invariant normalized measurements to information-theoretic representations and absolute-scale
	linear discrepancies.
	Furthermore, the proposed $\Gamma^{(g)}$ framework possesses linear computational complexity of
	order $O( N )$, and the differentiable nature of the selected operators enables direct and scalable
	integration into end-to-end learning architectures and gradient-based optimization processes.

	In conclusion, the presented Conflict Framework provides an adaptable, interpretable, and modular
	foundation for different classes of problems.
	This theoretical basis may pave the way for treating conflict in AI-driven decision systems not merely
	as an error to be eliminated, but as a rich and structured source of information capable of guiding
	the decision-making process.
	

\onecolumn

	\begin{appendices}
	
		\section{Canonical Formulation of the Conflict Framework}
		\label{apdx:conflict-framework-formulation}
		\begin{align*}
			\RISimplex & := \left\{ 
				\boldsymbol{X} \in \real^{r \times d}_{> 0} 
				\;\middle|\; 
				\forall{i} \in \{ 1, \dots, r \}, \; 
				\sum_{j = 1}^{d} x_{ij} = 1
			\right\}, \\[12pt]
			\epsilon & > 0, \\[12pt]
			\DataEpsilon & = \left\{
				\boldsymbol{X} \in \Data 
				\; \middle| \; 
				x_{ij} \ge \epsilon,
				\forall{i, j}
			\right\} \\[12pt]
			\Raw, \Context & \in \DataEpsilon, \\
			\Priority & \in \RISimplex, \\[12pt]
			g & : \left( \real_{> 0} \right)^{2} \to \real, \quad \text{(admissible conflict operator)}, \\ 
			\Act & := \{ g_1, g_2, g_3 \}, \\[12pt]
			g_1( x, y ) & := \frac{x - y}{ x + y }, \quad\text{([canonical] bounded, scaled-invariant)}, \\
			g_2( x, y ) & := \log\frac{x}{y}, \quad \text{(log-odds, unbounded)}, \\
			g_3( x, y ) & := x - y, \quad \text{(raw difference)}, \\[12pt]
			\RSpace^{(g)} & := \begin{cases}
				( -1, +1 )^{r \times d} & g = g_1, \\
				\Data & g = g_2, g = g_3
			\end{cases} \\[12pt]
			\Phi^{(g)} & : \left( \DataEpsilon \right)^{2} \to \RSpace^{(g)}, \\
			\Phi^{(g)}( \Raw, \Context )_{ij} & := g( \Raw_{ij}, \Context_{ij} ), \\[12pt] 
			\Psi & : \RSpace^{(g)} \to \Output, \quad \text{with} \quad \Output \in \left\{ 
				\Data, \real^{r}, \Simplex 
			\right\}, \\[12pt]
			\Gamma^{(g)} & : \left( \DataEpsilon \right)^{2} 
				\times \RISimplex 
					\to \Output \\
			\Gamma^{(g)}_{\Phi, \Psi}( \Raw, \Context, \Priority ) & := \Psi \! \left( 
				\Priority \odot \Phi^{(g)}( \Raw, \Context ) 
			\right)
		\end{align*}
	
	\end{appendices}
	
\twocolumn


	\bibliographystyle{plainnat}
	\bibliography{ConflictFramework}

@book{rudin1976principles,
  title={Principles of Mathematical Analysis},
  author={Rudin, Walter},
  year={1976},
  edition={3},
  publisher={McGraw-Hill},
  address={New York, NY, USA},
  isbn={0-07-054235-8},
  note={Section 5.19 discusses the Mean Value Theorem for real functions; multivariable extensions are covered in Chapter 9.}
}

@article{keeneyraiffa1979,
  author={Keeney, R. L. and Raiffa, H. and Rajala, David W.},
  journal={IEEE Transactions on Systems, Man, and Cybernetics}, 
  title={Decisions with Multiple Objectives: Preferences and Value Trade-Offs}, 
  year={1979},
  volume={9},
  number={7},
  pages={403-403},
  doi={10.1109/TSMC.1979.4310245}
 }

@misc{senerkoltun2019,
	title={Multi-Task Learning as Multi-Objective Optimization}, 
	author={Ozan Sener and Vladlen Koltun},
	year={2019},
	eprint={1810.04650},
	archivePrefix={arXiv},
	primaryClass={cs.LG},
	url={https://arxiv.org/abs/1810.04650}, 
}

@misc{yu2020gradientsurgerymultitasklearning,
	title={Gradient Surgery for Multi-Task Learning}, 
	author={Tianhe Yu and Saurabh Kumar and Abhishek Gupta and Sergey Levine and Karol Hausman and Chelsea Finn},
	year={2020},
	eprint={2001.06782},
	archivePrefix={arXiv},
	primaryClass={cs.LG},
	url={https://arxiv.org/abs/2001.06782}, 
}

@article{entropy,
  author={Shannon, C. E.},
  journal={The Bell System Technical Journal}, 
  title={A mathematical theory of communication}, 
  year={1948},
  volume={27},
  number={3},
  pages={379-423},
  doi={10.1002/j.1538-7305.1948.tb01338.x}
 }

@book{cover-thomas,
  author    = {Cover, Thomas M. and Thomas, Joy A.},
  title     = {Elements of Information Theory},
  publisher = {Wiley},
  edition   = {2},
  year      = {2006},
  doi			= {10.1002/047174882X}
}

@book{utility,
  author    = {Von Neumann, John and Morgenstern, Oskar},
  title     = {Theory of Games and Economic Behavior},
  publisher = {Princeton University Press},
  year      = {1944}
}

@article{luce,
  title={Individual Choice Behavior: A Theoretical Analysis.},
  author={R. Duncan Luce},
  year={1961},
  month={3},
  publisher={Taylor \& Francis, Ltd.},
  journal={Journal of American Statistical Association},
  doi={https://doi.org/10.2307/2282347},
  volume={56},
  number={293},
  pages={172-174},
  reviewed-author={Patrick Suppes}
}

@book{armstrong1988groups,
	author = {M. A. Armstrong},
	title = {Groups and Symmetry},
	publisher = {Springer New York, NY},
	doi = {https://doi.org/10.1007/978-1-4757-4034-9},
	edition = {1},
	year = {1988}
}

@book{higham2002accuracy,
   author = {Higham, Nicholas J.},
   title = {Accuracy and Stability of Numerical Algorithms},
   publisher = {Society for Industrial and Applied Mathematics},
   year = {2002},
	publisher = {SIAM},
   doi = {10.1137/1.9780898718027},
   edition   = {Second},
   URL = {https://epubs.siam.org/doi/abs/10.1137/1.9780898718027}
}

@book{feller1968introduction,
  title={An Introduction to Probability Theory and Its Applications, Vol. 1},
  author={Feller, William},
  year={1968},
  publisher={John Wiley \& Sons},
  edition={3rd},
  note={Rassal süreçlerin ve bağımsızlık varsayımının temel olasılık teorisi içinde titiz bir şekilde inşa edildiği dönüm noktası niteliğindeki eser.}
}

@book{casella2002statistical,
  title={Statistical Inference},
  author={Casella, George and Berger, Roger L.},
  year={2002},
  publisher={Wadsworth Group. Duxbury},
  isbn={0-534-24312-6},
  edition={2nd},
  note={İstatistiksel çıkarımın temellerini öğreten yaygın bir ders kitabı. Verilerin bağımsız ve aynı dağılımlı (IID) rassal bir örneklem olduğu temel varsayımı üzerine kuruludur.}
}

@article{csiszar1967information,
  author  = {Imre Csiszár},
  title   = {Information-type measures of difference of probability distributions and indirect observation},
  year    = {1967},
  journal = {Studia Scientiarum Mathematicarum Hungarica},
  volume	 = {2},
  pages	 = {299--318},
}

@article{bregman1967relaxation,
title = {The relaxation method of finding the common point of convex sets and its application to the solution of problems in convex programming},
journal = {USSR Computational Mathematics and Mathematical Physics},
volume = {7},
number = {3},
pages = {200-217},
year = {1967},
issn = {0041-5553},
doi = {https://doi.org/10.1016/0041-5553(67)90040-7},
url = {https://www.sciencedirect.com/science/article/pii/0041555367900407},
author = {L.M. Bregman}
}

@misc{bengio2013representation,
      title={Representation Learning: A Review and New Perspectives}, 
      author={Yoshua Bengio and Aaron Courville and Pascal Vincent},
      year={2014},
      eprint={1206.5538},
      archivePrefix={arXiv},
      primaryClass={cs.LG},
      url={https://arxiv.org/abs/1206.5538}, 
}

@misc{vaswani2017attention,
      title={Attention Is All You Need}, 
      author={Ashish Vaswani and Noam Shazeer and Niki Parmar and Jakob Uszkoreit and Llion Jones and Aidan N. Gomez and Lukasz Kaiser and Illia Polosukhin},
      year={2023},
      eprint={1706.03762},
      archivePrefix={arXiv},
      primaryClass={cs.CL},
      url={https://arxiv.org/abs/1706.03762}, 
}

@article{kartal2025csoftmax,
author = {Hakan Emre Kartal},
title = {C-Softmax: Contextual Softmax Operator Incorporating Row and Column Priorities},
journal = {TechRxiv},
volume = {2025},
number = {0829},
pages = {},
year = {2025},
doi = {10.36227/techrxiv.175647872.27401306/v1},
URL = {https://www.techrxiv.org/doi/abs/10.36227/techrxiv.175647872.27401306/v1},
eprint = {https://www.techrxiv.org/doi/pdf/10.36227/techrxiv.175647872.27401306/v1}}

@article{kartal2025cahpcahppp,
author = {Hakan Emre Kartal },
title = {Fusion of Quantitative and Qualitative Information in Multi-Criteria Decision Making: A Triple Information Fusion Model and an Innovative Approach in AHP},
journal = {TechRxiv},
volume = {2025},
number = {1022},
pages = {},
year = {2025},
doi = {10.36227/techrxiv.176115768.87188579/v1},
URL = {https://www.techrxiv.org/doi/abs/10.36227/techrxiv.176115768.87188579/v1},
eprint = {https://www.techrxiv.org/doi/pdf/10.36227/techrxiv.176115768.87188579/v1}}


\begin{IEEEbiography}[{\includegraphics[width=1in,height=1.25in,clip,keepaspectratio]{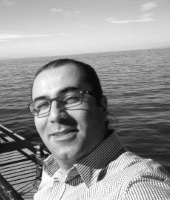}}]{Hakan Emre Kartal}
is an independent researcher working at the intersection of mathematics, artificial intelligence, and computational sciences. His research focuses on contextual decision systems, operator-based mathematical modeling, semantic weighting methods, and combinatorial representation frameworks.

He recently introduced the \emph{Contextual Softmax (C-Softmax)} operator, a context-aware probabilistic projection approach designed to incorporate external contextual structures into decision and learning systems. He is also developing semantic and operator-based frameworks for sequence representation, contextual conflict modeling, and directional discrepancy analysis, with potential applications in bioinformatics, representation learning, and AI-driven decision architectures.

His current work investigates context-sensitive mathematical operators, modular conflict representations, and scalable combinatorial methods for structured data analysis and intelligent decision systems. He also has a background in software engineering, supporting the computational and algorithmic aspects of his theoretical research.

ORCID: \href{https://orcid.org/0000-0002-3952-7235}{0000-0002-3952-7235}.
\end{IEEEbiography}

\end{document}